\newtheorem{assumption}{Assumption}
\newtheorem{theorem}{Theorem}
\def\BibTeX{{\rm B\kern-.05em{\sc i\kern-.025em b}\kern-.08em
    T\kern-.1667em\lower.7ex\hbox{E}\kern-.125emX}}
\begin{document}

\title{ Online Vertical Federated Learning for Cooperative Spectrum Sensing\\
}

\author{\IEEEauthorblockN{Heqiang Wang\IEEEauthorrefmark{1},
Jie Xu\IEEEauthorrefmark{1}}
\IEEEauthorblockA{\IEEEauthorrefmark{1}Department of Electrical and Computer Engineering,\\
University of Miami, Coral Gables, FL 33146, USA}}

\maketitle

\begin{abstract}
The increasing demand for wireless communication underscores the need to optimize radio frequency spectrum utilization. An effective strategy for leveraging underutilized licensed frequency bands is \textit{cooperative spectrum sensing} (CSS), which enable multiple \textit{secondary users} (SUs) to collaboratively detect the spectrum usage of \textit{primary users} (PUs) prior to accessing the licensed spectrum. The increasing popularity of machine learning has led to a shift from traditional CSS methods to those based on deep learning. However, deep learning-based CSS methods often rely on centralized learning, posing challenges like communication overhead and data privacy risks. Recent research suggests \textit{vertical federated learning} (VFL) as a potential solution, with its core concept centered on partitioning the deep neural network into distinct segments, with each segment is trained separately. However, existing VFL-based CSS works do not fully address the practical challenges arising from streaming data and the objective shift. In this work, we introduce \textit{online vertical federated learning} (OVFL), a robust framework designed to address the challenges of ongoing data stream and shifting learning goals. Our theoretical analysis reveals that OVFL achieves a sublinear regret bound, thereby evidencing its efficiency. Empirical results from our experiments show that OVFL outperforms benchmarks in CSS tasks. We also explore the impact of various parameters on the learning performance. 
\end{abstract}

\begin{IEEEkeywords}
Federated Learning, Online Learning, Cooperative Spectrum Sensing
\end{IEEEkeywords}

\section{Introduction}
As the demand for wireless communication continues to grow, optimizing the utilization of the radio frequency spectrum has become imperative. One effective solution for efficiently utilizing underutilized licensed frequency bands is \textit{cooperative spectrum sensing} (CSS)~\cite{akyildiz2011cooperative}, wherein \textit{secondary users} (SUs) work together to detect spectrum usage by \textit{primary users} (PUs) before accessing the licensed spectrum. In CSS, a \textit{fusion center} (FC) collects spectrum sensing data, such as \textit{received signal strength} (RSS) and location information, from multiple SUs, and then uses this aggregated data to make decisions regarding spectrum allocation and usage. With recent advancements in \textit{machine learning} (ML), CSS fusion strategies are increasingly incorporating ML techniques, where the fusion center trains a \textit{deep neural network} (DNN) to make spectrum-related decisions based on the SUs' sensing data \cite{lee2019deep}. However, these centralized training approaches necessitate SUs to transmit their raw sensing and location data to the FC, posing challenges like increased communication loads and potential privacy risks.

\begin{figure}[htbp]
\vspace{-5pt}
\centering
\subfloat{\includegraphics[width=0.98\linewidth]{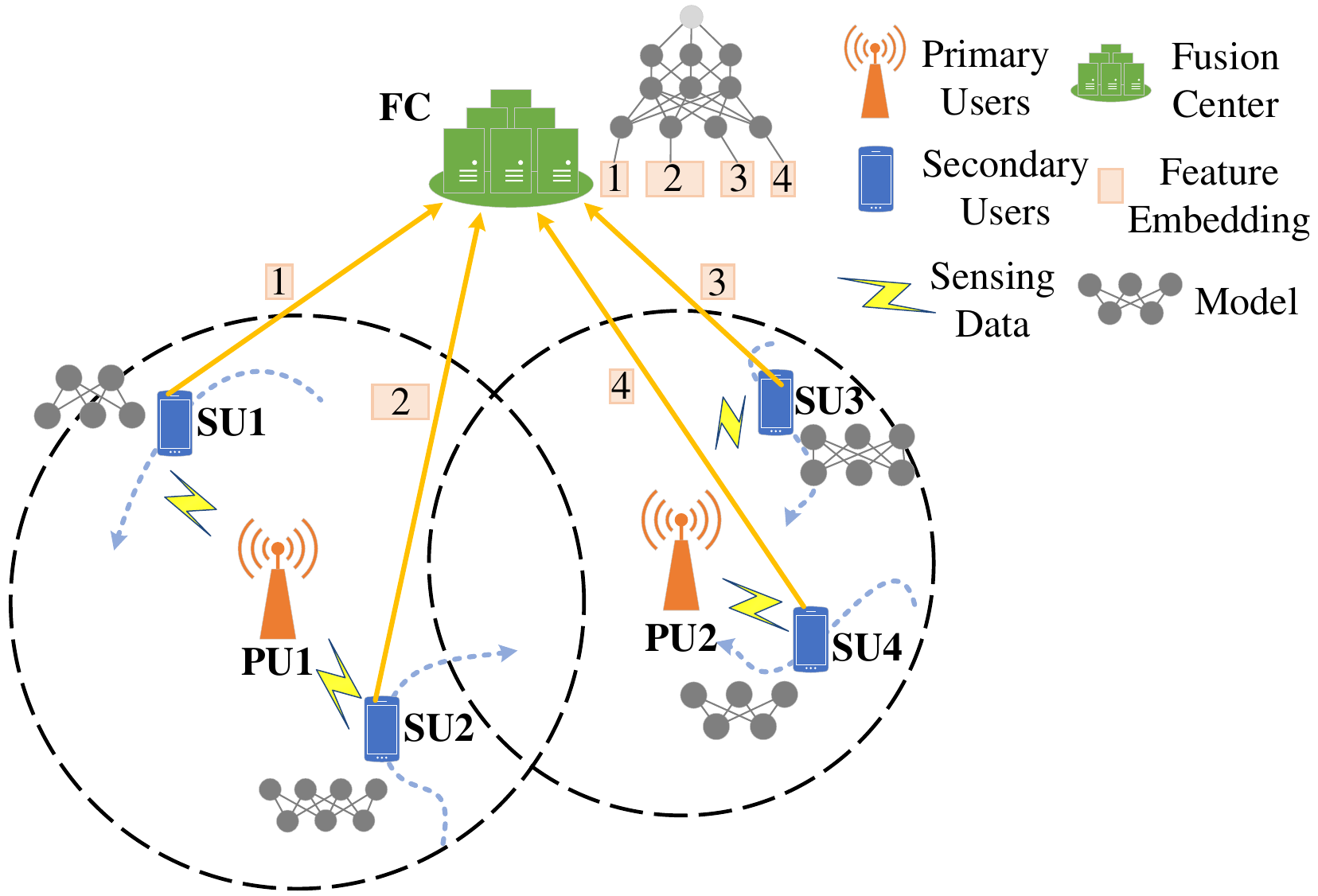}} 
\caption{The VFL-based mobile CSS scenario.}  
\label{ovfl-vf}
\vspace{-15pt}
\end{figure}

To address the aforementioned challenges, \textit{vertical federated learning} (VFL) has been recently proposed to tackle the CSS problem~\cite{zhang2020vertical, zhang2022low}. The core concept of VFL involves partitioning the DNN into distinct segments, each of which is trained separately by both individual SUs and the FC, thereby preserving the raw data at the SUs. More specifically, the DNN is partitioned into a head classifier and multiple vertically separable feature extractors, one for each SU, as illustrated in Fig.~\ref{ovfl-vf}. Instead of transmitting raw sensing data to the FC, each SU can now feed its local sensing results to its own feature extractor, subsequently forwarding the extracted feature, referred to as a feature embedding, to the FC. The feature embeddings from multiple SUs are then processed by the head classifier at the FC to derive the final spectrum decision. Given that feature embeddings are typically much smaller in size compared to raw data, this approach eases the communication load. Furthermore, the preservation of raw data at the SU significantly enhances data privacy.

It is important to highlight that VFL represents a substantial departure from conventional \textit{federated learning} (FL), or horizontal FL, and it is particularly well-suited for tackling the CSS problem. In traditional horizontal FL, all clients share the same feature space and DNN model architecture. However, for CSS, the sensing results obtained by different SUs exhibit varying patterns and can significantly differ based on the SUs' respective locations. Integrating the sensing results from all SUs leads to a more accurate determination of the PUs' states compared to relying solely on the sensing result from a single SU. Therefore, considering the sensing results of all SUs at a specific point in time as a single data sample, each SU's sensing result effectively represents a portion of the features within that data sample.

While VFL holds significant promise in addressing the CSS problem, existing approaches have overlooked several critical practical constraints. Firstly, SUs are typically edge devices with limited storage capacity, continuously receiving and recording data samples on-the-fly instead of having access to a complete training dataset from the outset. Therefore, VFL must operate with dynamic datasets that are constructed from an ongoing data stream, rather than relying on static datasets. Secondly, and perhaps even more critically, the underlying data distributions in the CSS learning problem can undergo changes due to the mobility of SUs and the evolving wireless environment, including factors such as shadowing effects between the SUs and PUs. These variations inherently lead to shifts in learning objectives. Consequently, traditional VFL solutions for CSS that rely on offline training methods become ineffective, emphasizing the need for an online learning-based VFL algorithm to address these challenges.

In this work, we propose \textit{online vertical federated learning} (OVFL) as a robust framework tailored to address the CSS problem while accommodating the aforementioned constraints. Our primary contributions can be summarized as follows:
(1) We formulate the CSS learning problem as an OVFL problem and present an OVFL algorithm for online training. This algorithm is specifically engineered to process streaming data and adapt to shifts in learning objectives. To further optimize communication efficiency for feature embedding transmission, we incorporate a quantization method. (2) We provide a comprehensive theoretical analysis of the OVFL algorithm, considering both scenarios with and without quantization. We demonstrate that, by selecting a learning rate $\eta = \mathcal{O}(1/\sqrt{T})$, OVFL without quantization achieves a sublinear regret bound over $T$ time rounds, specifically $\mathcal{O}(\sqrt{T})$. Additionally, in the scenario of OVFL with quantization, achieving a sublinear regret bound is feasible, provided the quantization level decreases appropriately. (3) We present empirical results that highlight OVFL's superiority over benchmark methods in addressing CSS problems. Additionally, we conduct an in-depth exploration of the impact of various parameters and scenarios on the performance of the OVFL algorithm.

The rest of this paper is organized as follows. In Section II, we discuss related works on CSS, VFL and \textit{online federated learning} (OFL). Section III presents the system model and formulates the OVFL problem. In Section IV, we introduce the detail of OVFL algorithm. Section V presents the regret analysis of OVFL in scenarios both with and without quantization. The experimental results of OVFL are presented in Section VI. Finally, Section VII concludes the paper.

\section{Related Work}
\subsection{Cooperative Spectrum Sensing}
CSS has become a critical technique for the efficient utilization of underutilized licensed frequency bands, improving the reliability of PU signal detection by leveraging the spatial diversity among SUs \cite{akyildiz2011cooperative}. The CSS problem has garnered significant attention within the wireless communication research community \cite{shi2020machine, zhang2022speckriging}. Traditional CSS approaches, including energy detection \cite{atapattu2011energy}, compressed sensing \cite{tian2007compressed}, and matched filter detection \cite{kapoor2011opportunistic}, are well-regarded for their simplicity and efficacy in diverse environments. However, the emergence of deep learning has changed CSS, introducing advanced solutions to overcome the constraints of traditional methods. Deep learning architectures, especially CNNs and RNNs, excel in discerning the temporal and spatial patterns in spectrum data, thereby enhancing detection accuracy and robustness \cite{sarikhani2020cooperative, lee2019deep, shi2020machine, zhang2022speckriging}. Concurrently, the expansion of distributed learning scenarios, such as FL, has prompted researchers to utilize VFL to address CSS problem \cite{zhang2022low, zhang2020vertical}. The VFL-based CSS approach mitigates the communication overhead and privacy risks inherent in data transfer within centralized deep learning frameworks. Regarding prior research, the author in \cite{zhang2020vertical} introduced a VFL-based CSS scheme that surpasses conventional methods, albeit without providing any theoretical analysis. Meanwhile, the author in \cite{zhang2022low} concentrated primarily on a truncated VFL algorithm, which significantly decreases training latency in VFL-based CSS schemes by omitting parties whose channel gains fall below a certain threshold.

\subsection{Vertical Federated Learning}
In recent years, VFL has attracted significant attention. Introduced by \cite{hardy2017private}, VFL operates on vertically partitioned data, a concept distinct from horizontal FL. Subsequent comprehensive surveys \cite{yang2023survey, wei2022vertical, liu2022vertical} have further explored VFL's scope. Unlike horizontal FL, VFL presents unique challenges. Some studies \cite{feng2022vertical, kang2022fedcvt, fan2022fair} have sought to optimize data utilization to improve the efficacy of the joint model in VFL. Others \cite{sun2022label, sun2021defending, lu2020multi} have concentrated on developing privacy-preserving protocols to mitigate data leakage risks. Efforts have also been made to reduce communication overhead, either through multiple local updates per iteration \cite{castiglia2022flexible, zhang2022adaptive} or via data compression techniques \cite{liu2020accelerating, castiglia2022compressed}. Furthermore, due to its practical benefits in enabling data collaboration among diverse institutions across multiple industries, VFL has attracted heightened interest from both academic and industrial communities, finding applications in a range of fields. These include, but are not limited to, recommendation systems \cite{zhang2021vertical, cui2021exploiting}, finance \cite{ou2020homomorphic}, and healthcare \cite{chen2020vafl, hu2022vertical}. However, existing VFL research mainly rely on static datasets, thus overlooking practical constraints like streaming data \cite{wang2023local} and objective shifts \cite{ruan2021towards}. Consequently, developing an online learning-based VFL solution that tackles these challenges is imperative.

\subsection{Online Federated Learning}
Online learning is designed to process data sequentially and update models incrementally, making it particularly well-suited for applications where data arrives continuously, and models must adapt to new patterns on-the-fly \cite{sahoo2017online}. These methods are computationally efficient and have the advantage of not requiring the entire dataset to be available at the start of the learning process, which is ideal for scenarios with limited memory resources. In the domain of FL, \textit{online federated learning} (OFL) \cite{hong2021communication} has emerged as a novel paradigm that extends the principles of online learning to a network of multiple learners or agents. The primary distinction between OFL algorithms and the traditional FL algorithms (such as FedAvg \cite{li2019convergence}, SCAFFOLD \cite{karimireddy2020scaffold}, FedProx \cite{li2020federated}) lies in the objective of local updates. Whereas the traditional FL algorithms focus on finding a single global model that minimizes a global loss function, OFL algorithms strive to identify a sequence of global models that minimize the cumulative regret. Limited work in recent years studied the problem of OFL. For example, the authors of \cite{kwon2023tighter} present a communication-efficient OFL method that balances reduced communication overhead with robust performance. In another instance, \cite{mitra2021online} introduces FedOMD, an OFL algorithm for uncertain environments that processes streaming data from clients without making statistical assumptions about loss functions. To the best of our knowledge, existing OFL approaches are primarily relevant to horizontal FL scenarios and cannot be directly applied to VFL scenarios.

\section{System Model}
We consider a wireless network consisting of a FC, $N$ PUs, and $K$ SUs. Time is discretized into periods, denoted as $t = 1, 2, ..., T$. Owing to the mobility of devices, the positions of PUs and SUs may change between time periods, resulting in a dynamic wireless environment. This variability can impact the evolving learning objectives, which will be discussed shortly.

In each time period, each SU $k$ collects a local training dataset consisting of $M$ sensing data samples \cite{zhang2022low, lee2019deep}, represented as $\textbf{x}^t_k \in \mathbb{R}^{M \times Q_k}$, where $Q_k$ is the dimension of the raw sensing data. The individual sensing data samples, denoted as $x^{t, m}_k$ for all SUs, are collected simultaneously and paired with a common label, $y^{t, m}$, representing the PUs' activities, such as their transmission power levels. The collective training dataset is denoted as $\textbf{x}^t \in \mathbb{R}^{M \times Q}$, where $Q = \sum_{k=1}^K Q_k$. It is important to note that the collective training dataset $\textbf{x}^t$ is introduced for conceptual clarity, as the local training datasets are retained by the respective SUs and are not shared with the FC during the training process.

In the VFL framework, each SU trains a distinct feature extractor model characterized by the parameter $\theta_k$ for its raw sensing data. Meanwhile, the FC trains a head classifier model represented by the parameter $\theta_0$. The overall model's parameters are collectively denoted as $\Theta = [\theta^\top_0, \theta^\top_1, ..., \theta^\top_K]^\top$. Let $h_k(\theta_k; x^{t,m}_k)$ represent the feature embedding extracted from the sample $x^{t,m}_k$. This feature embedding operation transforms the high-dimensional raw data into a lower-dimensional representation through multiple layers of DNN, effectively capturing essential information from the input data while significantly reducing its dimension. Utilizing these feature embeddings, we can express the loss functions for the collective training dataset at period $t$ as follows:
\begin{align}
F_t(\Theta; \textbf{x}^t, \textbf{y}^t)  = \frac{1}{M}\sum_{m=1}^M l_t(\theta_0, \{h_k(\theta_k; x^{t,m}_k)\}_{k=1}^K; y^{t,m})
\end{align}
where $l_t(\cdot)$ is the loss function for a single data sample. 

To streamline notations, we adopt several simplifications in the subsequent sections of this paper. (1) We denote the feature embedding for the dataset $\textbf{x}^{t}_k$ as $h_k(\theta; \textbf{x}^t_k)$ and often use the shorthand $h_k(\theta_k; \textbf{x}^t_k) = h^t_k(\theta_k)$. (2) We assign $k = 0$ to the FC, defining $h_0(\theta_0) = \theta_0$. It is important to note that $h_0(\theta_0)$ represents the head model rather than the feature embedding. (3) We compactly represent $F_t(\Theta; \textbf{x}^t, \textbf{y}^t)$ as $F_t(\Theta)$ to emphasize the dependency of the loss function on the overall model parameter $\Theta$. Additionally, we frequently express $F_t(\Theta) = F_t(h_0(\theta_0), h_1(\theta_1), ..., h_K(\theta_K))$.


Consider the sequence of models trained on the dynamic dataset denoted as $\Theta^1, \ldots, \Theta^T$. Learning regret, denoted by $\text{Reg}_T$, is defined as the discrepancy between the cumulative loss incurred by the learner and the cumulative loss of an optimal fixed model in hindsight. In other words:
\begin{align}
  \text{Reg}_T = \sum_{t=1}^{T}  F_t (\Theta^t; \textbf{x}^t, \textbf{y}^t ) - \sum_{t=1}^{T} F_t (\Theta^*; \textbf{x}^t, \textbf{y}^t) \label{regret}
\end{align}
Here, $\Theta^* = \arg\min_\Theta \sum_{t=1}^{T} F_t (\Theta; \textbf{x}^t, \textbf{y}^t)$ represents the optimal fixed model retrospectively. Our objective is to minimize learning regret, which is equivalent to minimizing the cumulative loss. Notably, if the learning regret exhibits sublinear growth concerning $T$, it suggests that the online learning algorithm can asymptotically minimize the training loss, even when training data is sequentially available over time.

\section{Online Vertical Federated Learning}
In this section, we introduce our OVFL algorithm. To enhance the clarity of the upcoming explanations, we define a time period as equivalent to a global training round in VFL. During each global round, every SU and FC can conduct a specified number of local training iterations denoted by the parameter $E$. We will use the index $\tau = 0, 1, 2, ..., E$ to track these local iterations. The OVFL algorithm is detailed in Algorithm \ref{alg: ovfl}. In each global round $t$, the OVFL algorithm works as follows. The time diagram of steps within a single global round is visually represented in Fig.~\ref{tl_gr}.

\begin{figure}[htbp]
\vspace{-2pt}
\centering
\subfloat{\includegraphics[width=0.98\linewidth]{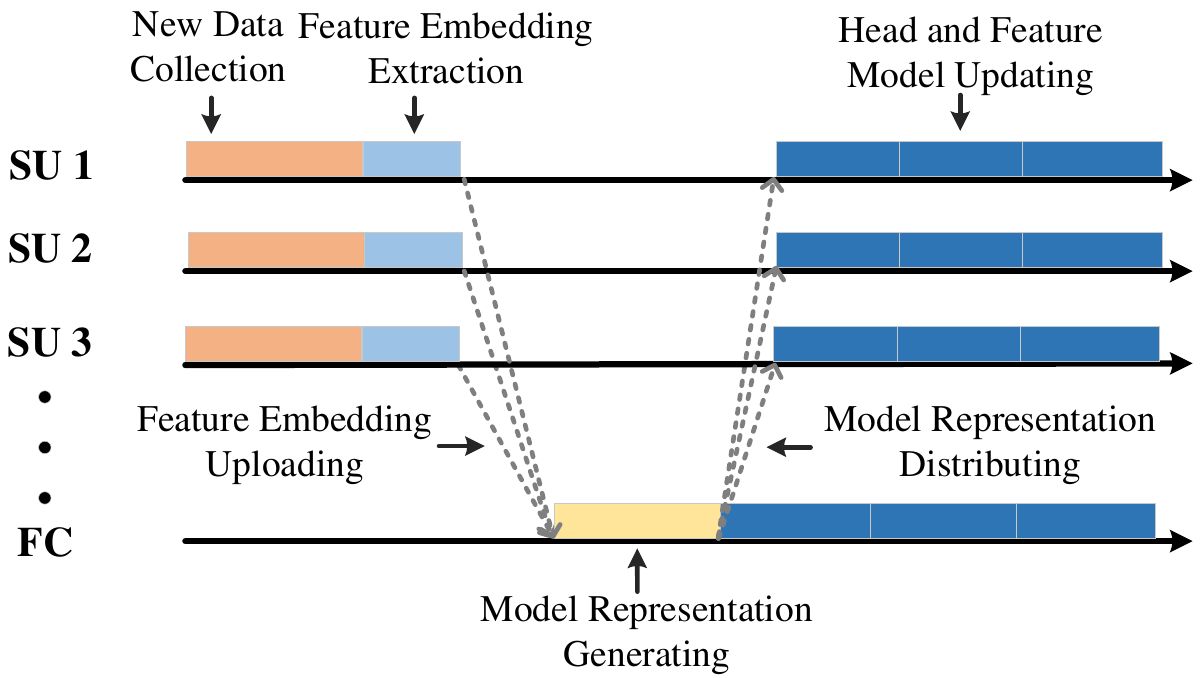}} 
\caption{The time diagram of steps in each global round $t$.}  
\label{tl_gr}
\vspace{-5pt}
\end{figure}

\begin{algorithm}
    \caption{Online Vertical Federated Learning} \label{alg: ovfl}
    \begin{algorithmic}[1]
        \State \textbf{Initialize}: The initial feature model $\theta_k^{t=0,\tau=0}$ for all SUs $k$ and the initial head model $\theta_0^{t=0,\tau=0}$ for FC.
        \For {$t =0, 1, 2, ..., T - 1$}
            \For {$\tau = 0, 1, 2, ..., E-1$}
                \If {$\tau = 0$}
                    \For {$k = 1, 2, ..., K$ in parallel}
                        \State Collects new data samples $\textbf{x}_k^{t}$
                        \State Gets the feature embedding ${h}_k ({\theta}_k^{t, 0}; \textbf{x}_k^t)$
                        \State Quantizes the embedding $\hat{h}_k ({\theta}_k^{t, 0}; \textbf{x}_k^t)$
                        \State Sends $\hat{h}_k ({\theta}_k^{t,0}; \textbf{x}_k^t)$ to FC
                    \EndFor
                    \State FC collects model representation $\hat{\Phi}^{t,0}$
                    \State FC sends $\hat{\Phi}^{t,0}$ to all SUs
                \EndIf
                \For {$k =0, 1, 2, ..., K$ in parallel}
                    \State Gets $\hat{\Phi}^{t, \tau}_k \leftarrow \left \{ \hat{\Phi}^{t,0}_{-k}; {h}_k ({\theta}_k^{t, \tau})\right \}$
                    \State Updates head or feature model ${\theta}_k^{t, \tau+1}$ 
                \EndFor
            \EndFor
        \EndFor
    \end{algorithmic}
\end{algorithm}

\subsubsection{Embedding Computation}
At the beginning of each global round $t$, each SU $k$ gathers a fresh local training dataset $\textbf{x}_k^{t}$ and $\textbf{y}^t$. Subsequently, each SU computes the feature embeddings for these new data samples, represented as $h^t_k(\theta_k^{t, 0})$, employing its current local feature extractor $\theta^{t,0}_k$. The initial feature model $\theta^{t, 0}_k$ for the current global round $t$ is inherited from the end of the previous global round $\theta^{t-1, E}_k$. To optimize communication efficiency, the feature embeddings undergo quantization before being transmitted to the FC. We refer to these quantized feature embeddings as $\hat{h}^t_k(\theta_k^{t, 0})$. Various types of quantizers, including scalar or vector quantizers can be utilized here.

It is important to note that, in contrast to the horizontal FL scenario where quantization is applied to the model or gradient, in VFL, quantization is implemented on the feature embeddings. This distinction results in a completely different set of challenges, which we will discuss in detail in the subsequent analysis of regret bound.

\subsubsection{Model Representation Distributing}
Upon collecting all quantized feature embeddings, the FC quantizes the current head model, designated as $\hat{\theta}^{t, 0}_0 = \hat{\theta}^{t-1, E}_0$. Furthermore, the FC compiles the model representation $\hat{\Phi}^{t, 0}$, which includes the quantized head model and all quantized feature embeddings. More precisely, the model representation is defined as follows:
\begin{align}
\hat{\Phi}^{t,0}\leftarrow \left \{ \hat{\theta}_0^{{t,0}},   \hat{h}_1^t ({\theta}_1^{{t,0}}), \cdots, \hat{h}_k^t ({\theta}_k^{{t,0}}), \cdots,   \hat{h}_K^t ({\theta}_K^{{t,0}})\right \}  
\end{align}
Subsequently, the FC distributes the model representation $\hat{\Phi}^{t, 0}$ to all SUs. The collection of feature embeddings from all SUs and the head model from FC, excluding $k$, is represented as $\hat{\Phi}^{t, 0}_{-k}$.

\subsubsection{Head and Feature Model Updating}
Each SU $k$ or the FC employs the received model representation $\hat{\Phi}^{t, 0}$ to update its own feature or head model for $E$ iterations, following this formula for all $\tau = 0, ..., E-1$:
\begin{align}
    \theta_k^{t, \tau+1} = \theta_k^{t, \tau} - \eta \nabla_k F_t\left (\hat{\Phi}^{t,0}_{-k}, {h}_k^t(\theta_k^{t, \tau} ) \right ) \label{lu}
\end{align}
For convenience, we abuse the notation $F_t$, and define $\nabla_k F_t(\cdot)$ to be the partial derivatives for parameters $\theta_k$.  

It is crucial to acknowledge that OVFL assumes all FC and SUs have access to the labels. In low-risk scenarios, such as the CSS problem, the need for label privacy among the FC and SUs might be negligible. However, in situations where labels are private, OVFL can be enhanced by incorporating the method from \cite{liu2019communication} for gradient computation, which obviates the need for sharing labels.

It is also worth mentioning that both the head model update and the feature model update utilize online gradient descent (OGD) \cite{ying2008online}, which computes the gradient using the current data samples. This is in contrast to the conventional offline approach where stochastic gradient descent is typically used, involving the sampling of a subset from a static dataset for gradient computation. 

\section{Regret Analysis}
In this section, a thorough regret analysis of OVFL is presented. We begin by evaluating the scenario without the impact of quantization, and subsequently analyze the scenario that includes the effects of quantization.

\subsection{OVFL without the impact of quantization}
In this scenario, each SU or FC performs $E$ iterations of local model update in every global round. We introduce some additional definitions and assumptions to support our analysis. We begin by defining the model representation for SU or FC at global round $t$ and local iteration $\tau$ as follows:
\begin{align}
{\Phi}^{t, \tau}_k\leftarrow \left \{ {\theta}_0^{{t, 0}},   {h}_1^t ({\theta}_1^{{t, 0}}), \cdots, {h}_k^t ({\theta}_k^{{t, \tau}}), \cdots,   {h}_K^t ({\theta}_K^{{t, 0}})\right \}  
\end{align}

To facilitate the analysis, we also define ${\textbf{G}}^{t, \tau}$ as the stacked partial derivatives at global round $t$ and local iteration $\tau$:
\begin{align}
   {\textbf{G}}^{t, \tau} := \left [ (\nabla_{0}  F_t ({\Phi}^{t, \tau}_0))^{\top}, \dots, (\nabla_{K}  F_t ({\Phi}^{t, \tau}_K))^{\top} \right ]^{\top}
\end{align}
For the sake of convenience, we denote ${\textbf{G}}^{t, \tau}_k = \nabla_{k}  F_t ({\Phi}^{t, \tau}_k)$. Subsequently, the overall model $\Theta$ updates in every local iteration can be represented as follows:
\begin{align}
\Theta^{t, \tau +1} = \Theta^{t, \tau} - \eta  {\textbf{G}}^{t, \tau}
\end{align}
By summarising the above equation for one global round $\tau = 0, ..., E-1$, we obtain:
\begin{align}
\Theta^{t+ 1, 0} = \Theta^{t, 0} - \eta  \sum_{\tau = 0}^{E -1} {\textbf{G}}^{t, \tau}
\end{align}

For the purposes of subsequent theoretical analysis, we consider a $D$-dimensional vector in both the overall gradient and model. We define an arbitrary vector element $d \in [1, D]$ in overall gradient as ${\textbf{G}}_{k, d}$, and similarly, the arbitrary vector element $d \in [1, D]$ in overall model is denoted as $\Theta_{k, d}$.

Subsequently, we will introduce the assumptions that are standard for analyzing online convex optimization, as referenced in \cite{park2022fedqogd}. Some assumptions are defined at the vector element level, tailored to the requirements of our proof.

\begin{assumption}
For any $(\textbf{x}^t; \textbf{y}^t)$, the loss function $F_t (\Theta; \textbf{x}^t; \textbf{y}^t)$ is convex with respect to $\Theta$ and differentiable.
\label{assm:cov}
\end{assumption}

\begin{assumption}
The loss function is $L$-Lipschitz continuous, the partial derivatives satisfies: $\left \| \nabla_{k}  F_t (\Theta)   \right \|^2 \leq L^2$.
\label{assm:bpd}
\end{assumption}

\begin{assumption}
The partial derivatives, corresponding to the consistent loss function, fulfills the following condition:
\begin{align}
\left \|  {\textbf{G}}_{k}^{t, \tau'} - {\textbf{G}}^{t, \tau}_{k} \right \| \leq \epsilon \left \| \theta_k^{t, \tau'} - \theta_k^{t, \tau} \right \| \notag
\end{align}
\label{assm:gradient-change}
In the context of the online learning scenario, where the loss function evolves over time, we utilize $t$ to indicate that gradients and models correspond to a consistent loss function. To denote that they originate from different local iterations, we employ $\tau'$ and $\tau$ respectively.
\end{assumption}

\begin{assumption}
The arbitrary vector element $d$ in the overall model $\Theta_{k, d}$ is bounded as follows: $\left | \Theta_{k, d} \right | \leq \beta $.
\label{assm:model-variant}
\end{assumption}

Assumption \ref{assm:cov} guarantees the convexity of the function, thereby allowing us to utilize the properties associated with convexity. Assumption \ref{assm:bpd} limits the magnitude of the loss function's partial derivatives. Assumption \ref{assm:gradient-change} ensures that the variation in the partial derivatives is confined within a specific range, which aligns with the model variation over two different local iterations that maintain a consistent loss function. This approach effectively utilizes the concept of smoothness. Lastly, Assumption \ref{assm:model-variant} specifies the permissible range for any vector element in the overall model. Based on the aforementioned assumptions, we can derive the following Theorem 1.

\begin{theorem}\label{thm:general}
Under Assumption 1-4, OVFL with local iterations $E>1$ and excluding the impact of quantization, achieves the following regret bound:
\begin{align}
 & {Reg}_T   = \sum_{t=1}^{T} \mathbb{E}_t \left [ F_t (\Theta^{t,0}; \textbf{x}^t; \textbf{y}^t)  \right ] - \sum_{t=1}^{T} F_t (\Theta^*; \textbf{x}^t; \textbf{y}^t) \notag \\
 & \leq \frac{  \left \| \Theta^{1, 0} - \Theta^* \right \|^2}{2 \eta E }  + \frac{1}{2}\eta T E K L^2 + 2 \eta T \beta D \epsilon E L
\end{align}
\end{theorem}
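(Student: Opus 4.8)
The plan is to follow the standard online convex optimization (OCO) template—bound the per-round regret by a telescoping distance term plus error terms—while carefully accounting for the two complications specific to OVFL: (i) multiple local iterations $E>1$ per global round, during which each party updates using \emph{stale} embeddings $\Phi^{t,0}_{-k}$ rather than the current ones, and (ii) the block (coordinate-wise) structure of the gradient $\textbf{G}^{t,\tau}$, where the $k$-th block is computed at the ``mixed'' model representation $\Phi^{t,\tau}_k$. First I would write the one-global-round update $\Theta^{t+1,0} = \Theta^{t,0} - \eta\sum_{\tau=0}^{E-1}\textbf{G}^{t,\tau}$ and expand $\|\Theta^{t+1,0}-\Theta^*\|^2$ against $\|\Theta^{t,0}-\Theta^*\|^2$. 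This produces a cross term $-2\eta\sum_{\tau}\langle \textbf{G}^{t,\tau},\ \Theta^{t,0}-\Theta^*\rangle$ and a quadratic term $\eta^2\|\sum_\tau \textbf{G}^{t,\tau}\|^2$. The quadratic term is controlled by Assumption \ref{assm:bpd}: $\|\sum_{\tau}\textbf{G}^{t,\tau}\|^2 \le E\sum_\tau\|\textbf{G}^{t,\tau}\|^2 \le E^2 K L^2$ (summing the $L^2$ bound over $K$ blocks), which after dividing by $2\eta E$ and summing over $t$ yields the $\tfrac12 \eta T E K L^2$ term.

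Next I would handle the cross term. The quantity we actually want to appear is $\langle \textbf{G}^{t,0},\ \Theta^{t,0}-\Theta^*\rangle$, because by convexity (Assumption \ref{assm:cov}) this lower-bounds $F_t(\Theta^{t,0}) - F_t(\Theta^*)$—note $\textbf{G}^{t,0}$ is the honest gradient block-stacked at $\Phi^{t,0}_k$ with $\tau=0$, and at $\tau=0$ we have $\Phi^{t,0}_k = \Theta^{t,0}$ across all blocks, so $\textbf{G}^{t,0}$ is a genuine (sub)gradient of $F_t$ at $\Theta^{t,0}$. So the key step is to replace each $\textbf{G}^{t,\tau}$ in the sum by $\textbf{G}^{t,0}$ and pay the drift $\|\textbf{G}^{t,\tau}_k - \textbf{G}^{t,0}_k\|$. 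Here is where Assumption \ref{assm:gradient-change} enters: $\|\textbf{G}^{t,\tau}_k - \textbf{G}^{t,0}_k\| \le \epsilon\|\theta_k^{t,\tau}-\theta_k^{t,0}\|$, and the local-update recursion \eqref{lu} with the bounded-gradient assumption gives $\|\theta_k^{t,\tau}-\theta_k^{t,0}\| \le \eta\tau L \le \eta E L$. Pairing this drift against $\Theta^{t,0}-\Theta^*$, whose each coordinate is bounded by $2\beta$ (Assumption \ref{assm:model-variant}, applied to both $\Theta^{t,0}$ and $\Theta^*$) over $D$ coordinates, and summing over the $E$ iterations and $K$ blocks, produces a bound of order $\eta E \cdot \beta D \cdot \epsilon \eta E L$ per round—after the $1/(2\eta E)$ normalization this is the $2\eta T \beta D \epsilon E L$ term (the constants absorbing the $2$ from $|\Theta-\Theta^*|\le 2\beta$ and a factor from the sum over $\tau$).

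Assembling: from the expansion, $\sum_\tau\langle\textbf{G}^{t,0},\Theta^{t,0}-\Theta^*\rangle \le \tfrac{1}{2\eta}\big(\|\Theta^{t,0}-\Theta^*\|^2 - \|\Theta^{t+1,0}-\Theta^*\|^2\big) + \tfrac{\eta}{2}E^2KL^2 + (\text{drift term})$; dividing by $E$, using convexity to turn the left side into $F_t(\Theta^{t,0})-F_t(\Theta^*)$, and telescoping the distance terms over $t=1,\dots,T$ (dropping the final negative $\|\Theta^{T+1,0}-\Theta^*\|^2$) gives exactly the claimed bound. The main obstacle I anticipate is step two: making rigorous the replacement of $\textbf{G}^{t,\tau}$ by $\textbf{G}^{t,0}$ in the inner product and correctly bookkeeping the block structure—since Assumption \ref{assm:gradient-change} controls only the change in the $k$-th gradient block due to the change in the $k$-th model block, one must check that the cross term decomposes blockwise as $\sum_k \langle \textbf{G}^{t,\tau}_k,\ \theta_k^{t,0}-\theta_k^*\rangle$ and that the stale off-diagonal entries in $\Phi^{t,\tau}_k$ (frozen at $\tau=0$) do not introduce additional uncontrolled error, which is precisely why the assumption is phrased at the per-block, ``consistent loss function'' level. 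The expectation $\mathbb{E}_t[\cdot]$ in the theorem statement would be carried along trivially if any randomness (e.g. in data sampling) is present, but with OGD on the full current batch it is essentially a formality.
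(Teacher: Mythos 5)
Your proposal is correct and follows essentially the same route as the paper's proof: the same expansion of $\|\Theta^{t+1,0}-\Theta^*\|^2$ with the $E\sum_\tau\|\textbf{G}^{t,\tau}\|^2$ bound on the quadratic term, the same replacement of $\textbf{G}^{t,\tau}$ by $\textbf{G}^{t,0}$ paying a per-coordinate drift of $\eta\epsilon E L$ against the $2\beta$-bounded coordinates over $D$ dimensions (this is exactly the paper's Lemma~1), and the same convexity-plus-telescoping assembly. The blockwise bookkeeping you flag as a possible obstacle is handled in the paper precisely as you anticipate, at the per-block, coordinate-wise level.
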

\begin{proof}
The proof can be found in Appendix \ref{appex-theo1}.
\end{proof}

According to Theorem \ref{thm:general}, by setting $\eta = \mathcal{O}(1/\sqrt{T})$, the OVFL can achieve a sublinear regret bound over $T$ time rounds, specifically $\mathcal{O}(\sqrt{T})$. A sublinear regret bound indicates that the average regret per round, defined as the regret divided by the number of rounds, approaches zero as the number of rounds increases indefinitely. This suggests that the algorithm progressively refines its performance by learning from its errors. 

\subsection{OVFL with the impact of quantization}
Next, we present the proof for the scenario where $E > 1$, taking into account the impact of quantization. To facilitate this proof, we need to further introduce some additional notations. Initially, We begin by defining the model representation for SU or FC at global round $t$ and local iteration $\tau$, incorporating the effects of quantization, as follows:
\begin{align}
\hat{\Phi}^{t, \tau}_k\leftarrow \left \{ \hat{\theta}_0^{{t, 0}},   \hat{h}_1^t ({\theta}_1^{{t, 0}}), \cdots, {h}_k^t ({\theta}_k^{{t, \tau}}), \cdots,   \hat{h}_K^t ({\theta}_K^{{t, 0}})\right \}  
\end{align}
Then we also define $\hat{\textbf{G}}^{t, \tau}$ as the stacked partial derivatives at global round $t$ and local iteration $\tau$ with the impact of quantization:
\begin{align}
   \hat{\textbf{G}}^{t, \tau} := \left [ (\nabla_{0}  F_t (\hat{\Phi}^{t, \tau}_0))^{\top}, \dots, (\nabla_{K}  F_t (\hat{\Phi}^{t, \tau}_K))^{\top} \right ]^{\top}
\end{align}
Then, within the quantization scenario, the update of the overall model $\Theta$ in every local iteration becomes:
\begin{align}
\Theta^{t, \tau +1} = \Theta^{t, \tau} - \eta  \hat{\textbf{G}}^{t, \tau}
\end{align}
By summarizing the above equation for one global round $\tau = 0, ..., E-1$, we can get:
\begin{align}
\Theta^{t+ 1, 0} = \Theta^{t, 0} - \eta  \sum_{\tau = 0}^{E -1} \hat{\textbf{G}}^{t, \tau}
\end{align}
Here we introduce an additional assumption to bound the impact of quantization on the gradient.
\begin{assumption}
The arbitrary vector element $d$ in the overall gradient, both with and without the impact of quantization, demonstrates a finite range of variation: $\left |\hat{\textbf{G}}^{t, \tau}_{k, d} - {\textbf{G}}^{t, \tau}_{k, d}  \right | \leq \rho$.
\label{assm:gradient-quanti}
\end{assumption}

Assumption \ref{assm:gradient-quanti} guarantees that the variation in each vector element of the gradient remains within a pre-specified range when comparing gradients with and without quantization. Then we can further deduce the following Theorem 2, taking into account the impact of quantization.

\begin{theorem}\label{thm:general-quanti}
Under Assumption 1-5, OVFL employing local iteration $E>1$ and taking into account the impact of quantization, achieves the following regret bound:
\begin{align}
 & {Reg}_T   = \sum_{t=1}^{T} \mathbb{E}_t \left [ F_t (\Theta^{t,0}; \textbf{x}^t; \textbf{y}^t)  \right ] - \sum_{t=1}^{T} F_t (\Theta^*; \textbf{x}^t; \textbf{y}^t) \notag \\
 & \leq \frac{  \left \| \Theta^{1, 0} - \Theta^* \right \|^2}{2 \eta E }   + \eta T E K L^2  + \eta T E D \rho^2 \notag \\
& + 2  \eta T \beta D  \epsilon E L + 2 T \beta D \rho
\end{align}
\end{theorem}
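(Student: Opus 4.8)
# Proof Proposal for Theorem 2

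The plan is to mirror the structure of the (unquantized) Theorem 1 proof and then carefully account for the extra error terms that the quantization operator introduces into the gradients. I would start from the standard online-gradient-descent one-step inequality applied to the \emph{global} update $\Theta^{t+1,0} = \Theta^{t,0} - \eta \sum_{\tau=0}^{E-1}\hat{\textbf{G}}^{t,\tau}$. Expanding $\|\Theta^{t+1,0}-\Theta^*\|^2$ and rearranging gives a telescoping term $\frac{1}{2\eta E}(\|\Theta^{t,0}-\Theta^*\|^2 - \|\Theta^{t+1,0}-\Theta^*\|^2)$, a quadratic term $\frac{\eta}{2}\|\sum_\tau \hat{\textbf{G}}^{t,\tau}\|^2$, and a cross term $\langle \sum_\tau \hat{\textbf{G}}^{t,\tau}, \Theta^{t,0}-\Theta^*\rangle$. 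The quadratic term is controlled by first writing $\hat{\textbf{G}}^{t,\tau} = \textbf{G}^{t,\tau} + (\hat{\textbf{G}}^{t,\tau}-\textbf{G}^{t,\tau})$, using $\|a+b\|^2 \le 2\|a\|^2+2\|b\|^2$, Assumption 2 (componentwise/partial-derivative bound $L$, summed over the $K+1$ blocks to give $KL^2$-type bounds) for the first piece and Assumption 5 ($|\hat{\textbf{G}}^{t,\tau}_{k,d}-\textbf{G}^{t,\tau}_{k,d}|\le\rho$, summed over $D$ coordinates) for the second; after summing over $\tau=0,\dots,E-1$ and over $t$ this produces the $\eta T E K L^2 + \eta T E D\rho^2$ contributions.

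The cross term is where convexity enters. By Assumption 1, $F_t(\Theta^{t,0}) - F_t(\Theta^*) \le \langle \nabla F_t(\Theta^{t,0}), \Theta^{t,0}-\Theta^*\rangle$, and the gradient here is $\textbf{G}^{t,0}$ evaluated at the exact (unquantized) round-start model. So I must bridge from $\sum_\tau \hat{\textbf{G}}^{t,\tau}$ back to $E\cdot\nabla F_t(\Theta^{t,0})$. I would split this as: (i) the drift across local iterations $\tau=0,\dots,E-1$ — i.e. $\hat{\textbf{G}}^{t,\tau}$ versus $\hat{\textbf{G}}^{t,0}$ — controlled by Assumption 3 (the $\epsilon$-Lipschitz-in-model property) together with the fact that after $\tau$ local steps each block has moved at most $\eta\tau L$; this, combined with the coordinatewise model bound $\beta$ and dimension $D$, is the source of the $2\eta T\beta D\epsilon E L$ term, exactly as in Theorem 1; and (ii) the quantization gap $\hat{\textbf{G}}^{t,0}$ versus $\textbf{G}^{t,0}$, bounded via Assumption 5 by $\rho$ per coordinate. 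When the $\rho$-gap is paired with $\|\Theta^{t,0}-\Theta^*\|$, bounded coordinatewise by $2\beta$ over $D$ coordinates, and then multiplied by the $E$ iterations and summed over $t$, it yields the final additive term $2T\beta D\rho$ (note: \emph{not} multiplied by $\eta$, since it comes from the cross term after dividing the whole one-step inequality by $\eta E$ — this is the structurally new term compared with Theorem 1, and its non-vanishing-in-$\eta$ nature is exactly why the theorem's remark says the quantization level must itself shrink).

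Finally I would telescope $\sum_{t=1}^T \frac{1}{2\eta E}(\|\Theta^{t,0}-\Theta^*\|^2-\|\Theta^{t+1,0}-\Theta^*\|^2) \le \frac{\|\Theta^{1,0}-\Theta^*\|^2}{2\eta E}$, take expectations $\mathbb{E}_t[\cdot]$ over any randomness in the quantizer (Assumptions 2, 3, 5 hold deterministically/uniformly so the bounds survive the expectation), collect the four pieces, and match them against the claimed bound. The main obstacle I anticipate is bookkeeping the two \emph{different} roles $\rho$ plays — once inside the squared-gradient term (giving $\eta T E D\rho^2$) and once inside the convexity cross term (giving $2T\beta D\rho$) — and making sure the local-iteration drift argument of Assumption 3 is applied to the quantized iterates $\hat{\Phi}^{t,\tau}_k$ correctly, since only the $k$-th block is unquantized in $\hat{\Phi}^{t,\tau}_k$ while the others are frozen at their quantized round-start values; keeping the per-block versus coordinatewise ($D$-indexed) accounting consistent with how constants $K$, $D$, $\beta$, $L$, $\epsilon$, $\rho$ appear is the delicate part rather than any single hard inequality.
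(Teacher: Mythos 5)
Your proposal follows essentially the same route as the paper's proof: the same one-step expansion of $\left\|\Theta^{t+1,0}-\Theta^*\right\|^2$, the same $\|a+b\|^2\le 2\|a\|^2+2\|b\|^2$ split of the quantized-gradient norm (yielding $\eta T E K L^2+\eta T E D\rho^2$ via Assumptions 2 and 5), and the same convexity-plus-coordinatewise bridging of the cross term that the paper packages as Lemma 2 — you merely insert the intermediate term $\hat{\textbf{G}}^{t,0}$ rather than $\textbf{G}^{t,\tau}$, an immaterial reordering — producing the $2\eta T\beta D\epsilon E L$ drift term and the non-$\eta$-scaled $2T\beta D\rho$ quantization term. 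The constants, the telescoping step, and the treatment of expectations all match the paper's argument.
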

\begin{proof}
The proof can be found in Appendix \ref{appex-theo2}.
\end{proof}
According to Theorem 2, by setting $\eta = \mathcal{O}(1/\sqrt{T})$, the OVFL with quantization can achieve the regret bound $\mathcal{O}(\sqrt{T} + T \rho)$ over $T$ time rounds. In this study, we find that due to the presence of quantization, the term $\mathcal{O}( T \rho)$ is critical in determining whether a sublinear regret bound can be achieved. In the workflow of VFL, quantization is applied to feature embedding and the head model instead of being applied directly to gradients. This results in an inability to precisely determine the rate at which $\rho$ in relation to specific changes in the quantization level. However, it is important to note that by gradually reducing the quantization level, a gradient quantization error can be achieved that aligns with the rate of $\rho = \mathcal{O}(1/\sqrt{T})$. Therefore, OVFL with quantization can ultimately achieve a sublinear regret in scenarios that involve appropriately decaying quantization levels.

\section{Simulation Results}
In this section, we evaluate the performance of the OVFL algorithm in the context of CSS involving mobile SUs. We simulate a wireless network spanning an area of $500$m $\times$ $500$m, wherein 4 SUs continuously monitor the states of 2 PUs. Each PU $n$ with fixed locations operates at one of four distinct transmit power levels, denoted as $P_n^{tr} \in \left \{ 1, 2, 3, 4  \right \} $. Then the RSS of SU $k$ is determined using the following path loss model:
\begin{align}
P_{k}^{re} = \sum_{n \in N}\left (  P_{k,n}^{tr} - 10 \times \phi  \times \log_{10}(d_{k, n}) - X_{se} \right )
\end{align}
Where the pathloss exponent $\phi $ is set to  4. The shadowing effect is denoted as $X_{se}$, follows a normal distribution that varies according to location and demonstrates a mean value ranging from 0 to 10 dB, as shown in Fig.~\ref{semv}. Neighboring regions typically display similar distributions of shadowing effects. SUs are characterized by their mobility, moving randomly at mobility rate $v$ each global round. At their respective locations, these SUs collect RSS as local feature vectors. Importantly, within a single global round $t \in T$, the channel states and positions of all SUs remain constant.

\begin{figure}[htbp]
\vspace{-10pt}
\centering
\subfloat{\includegraphics[width=0.7\linewidth]{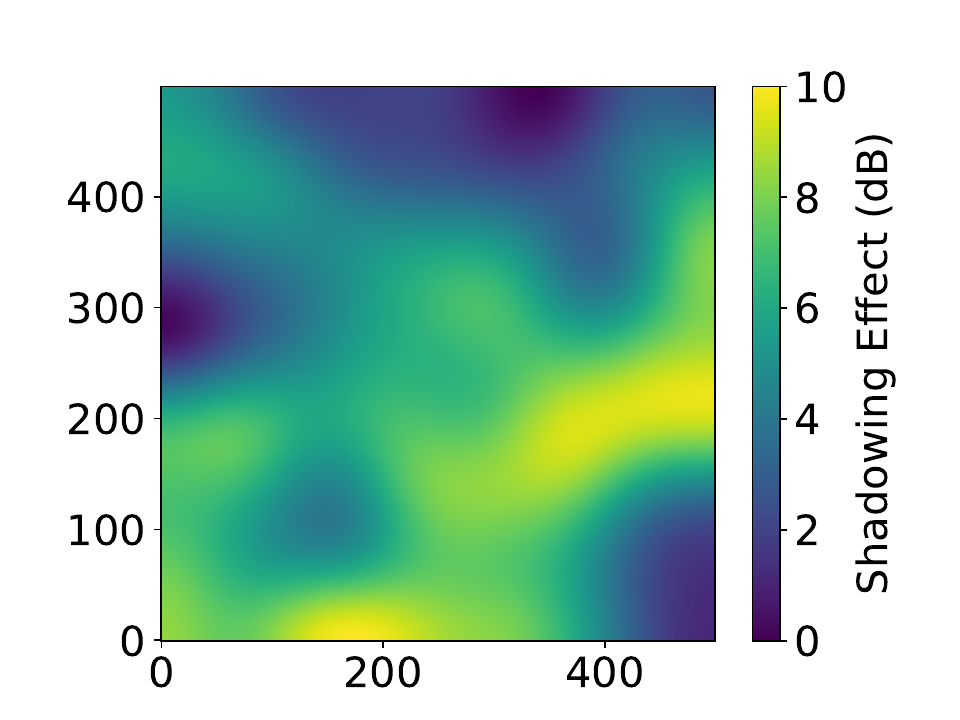}} 
\caption{A map of shadowing effect in space.}  
\label{semv}
\vspace{-5pt}
\end{figure}

Assume SU $k$ relocates to a new position at the start of the global round $t, \tau=0$ with a mobility rate $v$ and begins gathering RSSs at the current location. Fig.~\ref{stp} illustrates the trajectories of the SUs and the positions of the PUs. To accumulate sufficient data at each global round, each SU needs to collect RSSs across multiple slots, each characterized by varying transmit power level of the PUs. Additionally, within each slot, SUs also need to collect RSSs under a consistent transmit power level of the PUs. In each slot, the RSS is recorded as a feature for every SU, capturing energy perception data at various locations concurrently. According to this collection principle, each SU records 40 slots of RSS data samples at its current global round, amidst PUs with varying transmit power levels. Within each slot, each SU collects 100 RSSs as features,  in the presence of PUs maintaining consistent transmit power levels. Therefore, the local feature vector for each SU comprises 102 dimensions, incorporating the SU's own 2D location and the 100 RSSs. The label for each data sample spans 2 dimensions, reflecting the transmit power levels of the two PUs within a single slot. In each global round $t$, SUs capture 20 data samples for training, and 20 data samples for testing.

\begin{figure}[htbp]
\vspace{-10pt}
\centering
\subfloat{\includegraphics[width=0.7\linewidth]{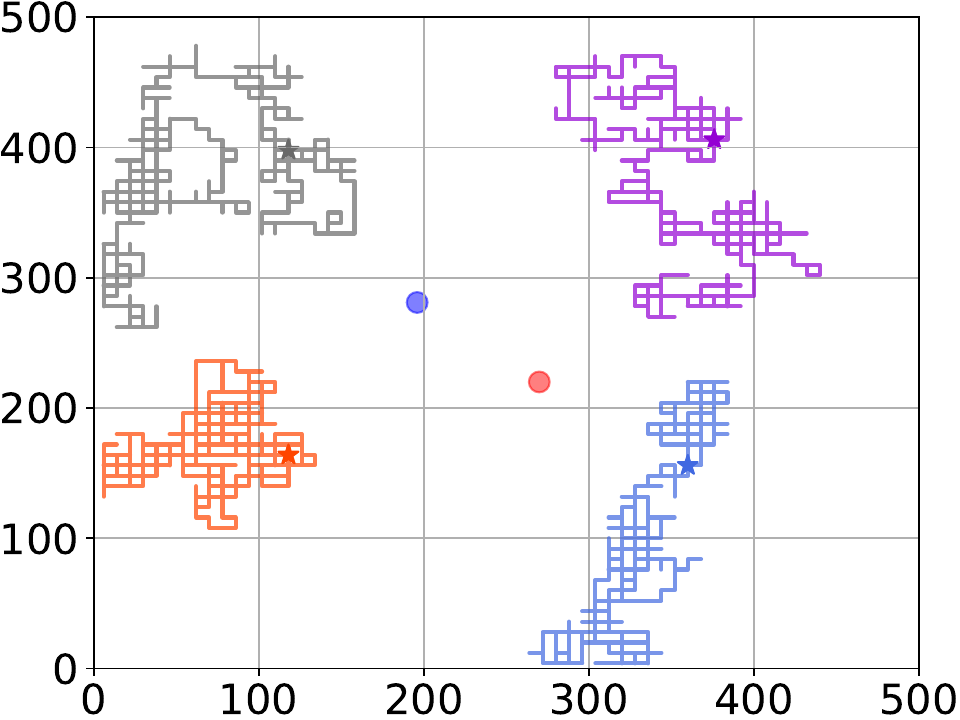}} 
\caption{A map of the SU's trajectory in space.}  
\label{stp}
\vspace{-10pt}
\end{figure}

We now delineate the specifics of the models and quantizers employed in our experiments. For both the SU and the FC, we utilize fully connected networks (FCNs). Each SU is equipped with a 5-layer FCN with size $(102, 128, 256, 64, 16)$. Conversely, the FC incorporates a 3-layer FCN of size $(16 \times 4, 8, 2)$. For quantization, we first employ an uniform scalar quantizer. In our experiments, each embedding component is represented as a 32-bit float. Let $b$ denote the bits per component to which we compress. In the context of the uniform scalar quantizer, this implies $2^b$ quantization levels. When quantization is not applied, $b$ is equal to 32. The training parameters are configured as follows: local iterations $E=[1, 4]$, mobility rate $v = [1, 5]$, global round $T=300$, learning rate $\eta = 0.0001$, and the number of bits per component, $b = [2, 4]$. For the loss function, we consider using mean squared error (MSE) to evaluate learning performance.

\textbf{Benchmarks}. In the experiment, the following benchmarks are used for performance comparison. 
\begin{enumerate}
    \item \textbf{Centralized Cooperative Spectrum Sensing (CC)}. In this scenario, each SU employs conventional training methods to transmit data to the FC, which then integrates the training. While our experiments do not explicitly consider data privacy breaches in this process, it is important to note that the act of data transmission inherently increases the risk of privacy leakage.
    \item \textbf{Lazy Cooperative Spectrum Sensing (LC)}. In this scenario, each SU initially updates the online model. However, after a specified duration, the model discontinues its updates and exclusively depends on the previously trained model for testing.
\end{enumerate}

In our experiments, we initially explore the effects of the aforementioned parameter variations. Furthermore, we delve into the impacts of additional parameters on learning performance to more comprehensively demonstrate the characteristics of our proposed OVFL algorithm. 

\textbf{Performance comparison}. We first evaluate the regression test loss of the OVFL with benchmarks with $E=1$ and $v=1$. Given the aforementioned simulation setup, the performance comparison between OVFL and benchmarks is shown in Fig.~\ref{ovfl-pcr} and Fig.~\ref{ovfl-pcc}. Specifically, Fig.~\ref{ovfl-pcr} illustrates the relationship between test loss and the number of rounds, whereas Fig.~\ref{ovfl-pcc} depicts the correlation between test loss and communication cost. We derive several key observations from these figures. 

\begin{figure}[htbp]
\vspace{-10pt}
\centering
\subfloat{\includegraphics[width=0.7\linewidth]{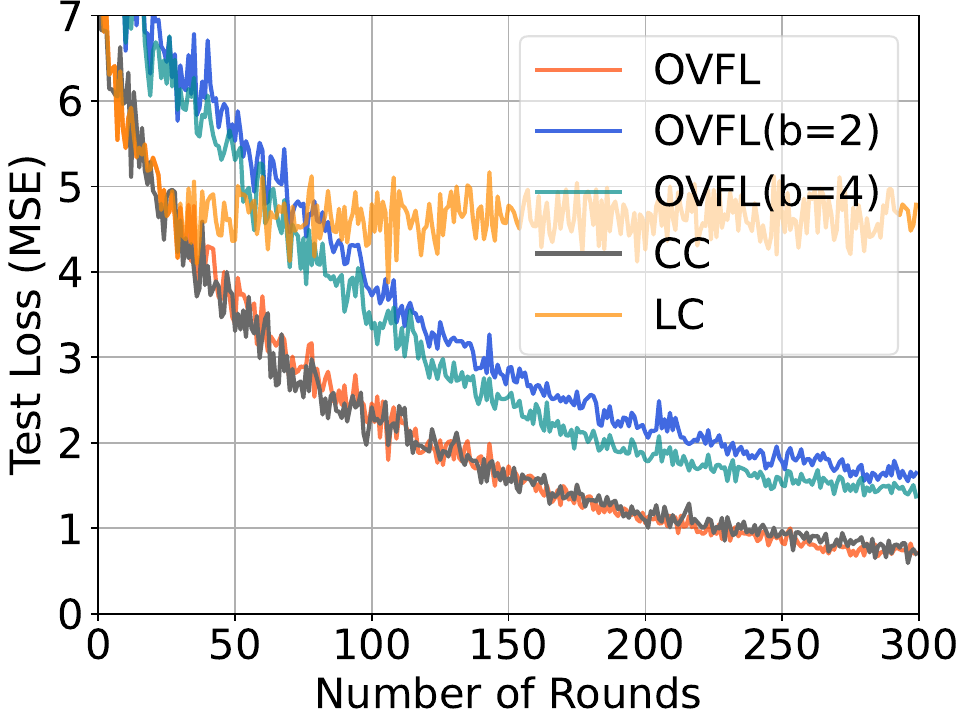}} 
\caption{Performance comparison between OVFL and benchmarks by rounds.}  
\label{ovfl-pcr}
\vspace{-5pt}
\end{figure}

From Fig.~\ref{ovfl-pcr}, we observe that, given the online nature of the problem, the LC approach utilizing the previous model incurs a substantial test loss, markedly inferior to other methods. Furthermore, OVFL and CC exhibit comparable behaviors. In comparisons of OVFL with quantization, the test loss rate declines as $b$ decreases, primarily because a smaller $b$ results in a larger quantization error. Therefore the adoption of the quantization technique will always have an impact on learning performance.

\begin{figure}[htbp]
\vspace{-10pt}
\centering
\subfloat{\includegraphics[width=0.7\linewidth]{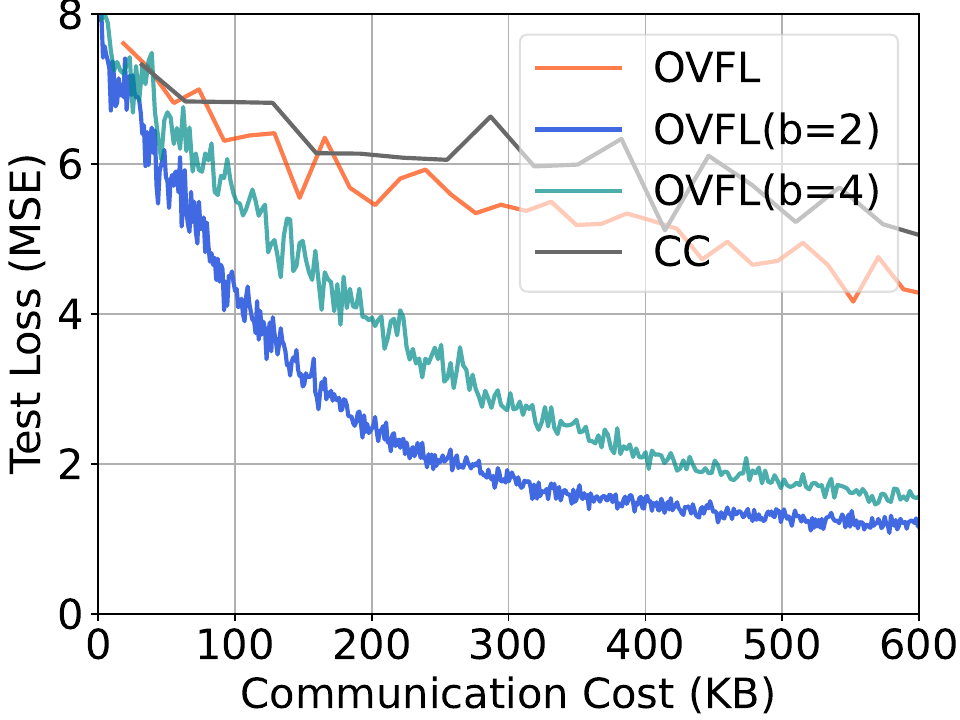}} 
\caption{Performance comparison between OVFL and benchmarks by communication cost.}  
\label{ovfl-pcc}
\vspace{-5pt}
\end{figure}

Referring to Fig.~\ref{ovfl-pcc}, it becomes apparent that at a constant communication cost, $b=2$ offers the best test loss due to its minimal communication overhead. This advantage is ascribed to the negligible disparity in test loss between $b=2$ and $b=4$, along with the fact that the communication cost for $b=2$ is merely half compared to that for $b=4$. By contrast, OVFL without quantization underperforms in this case than OVFL with quantization, which can be attributed to the absence of any quantization technique. CC method falls behind all other approaches. Given that the communication cost associated with data transfer exceeds that of feature embedding transfer, CC incurs a higher communication overhead and, additionally, presents an elevated risk of privacy leakage. 

Consequently, it is evident that a trade-off exists between learning performance and communication cost. When prioritizing learning performance, it is advisable to avoid excessively low levels of quantization. Conversely, if communication constraints are a significant bottleneck for training, it may be necessary to compromise on learning performance and employ quantization methods.

Based on the experimental results, it is evident that our proposed algorithm outperforms the benchmarks. Subsequent sections will present further comparisons of OVFL without quantization under various settings.

\begin{figure}[htbp]
\vspace{-5pt}
\centering
\subfloat{\includegraphics[width=0.7\linewidth]{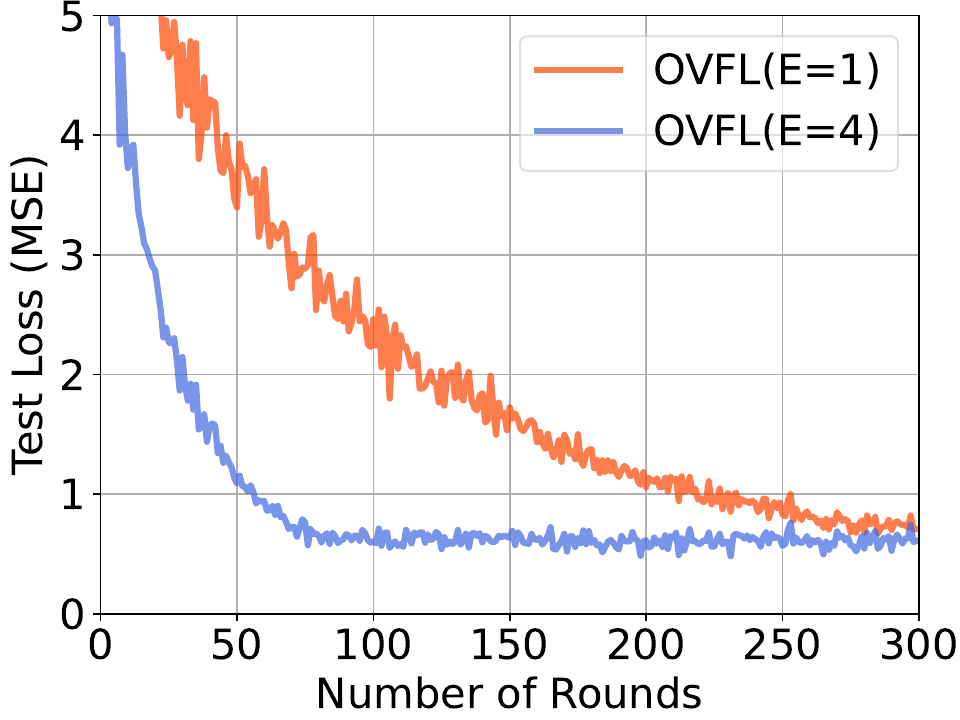}} 
\caption{Performance comparison of OVFL with different $E$.}  
\label{ovfl-e}
\vspace{-5pt}
\end{figure}

\textbf{Impact of local epoch $E$}. In this section, we evaluate the impact of a local epoch setting of $E \in \left [1, 4 \right ]$ on the learning performance while keeping the $v=1$. Fig.~\ref{ovfl-e} illustrates the relationship between test loss and round $T$, indicating that a higher $E$ value facilitates faster convergence without increasing communication costs. This is achieved by maximizing local training in each round $t$ within the given training duration, thereby accelerating convergence without adding to the communication overhead.

\begin{figure}[htp]
\vspace{-5pt}
\centering
\subfloat{\includegraphics[width=0.7\linewidth]{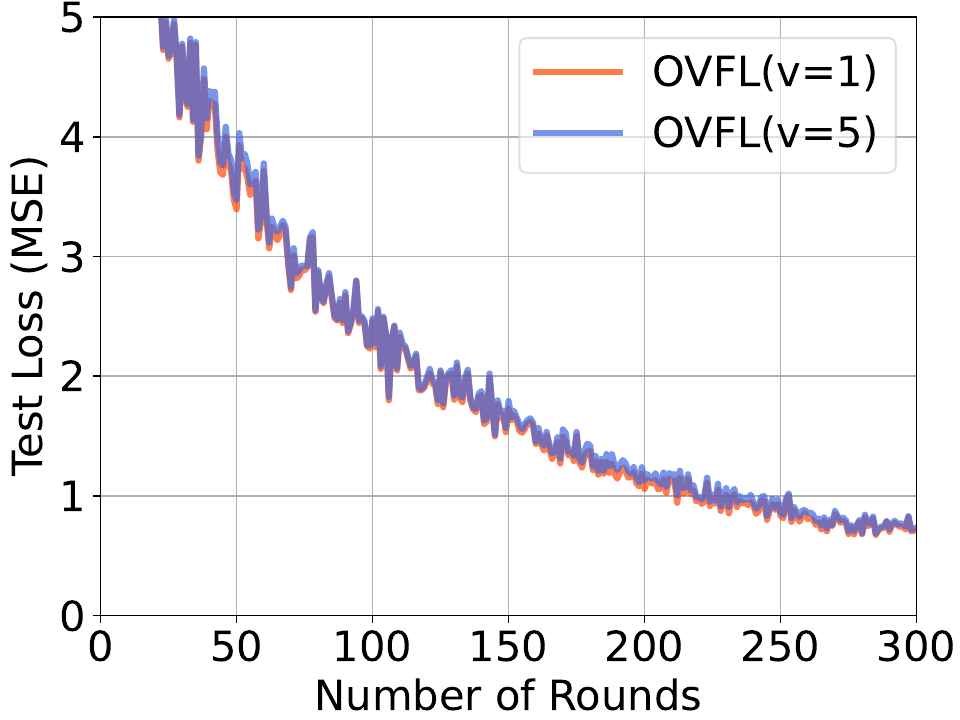}} 
\caption{Performance comparison of OVFL with different $v$.}  
\label{ovfl-v}
\vspace{-5pt}
\end{figure}

\textbf{Impact of mobility rate $v$}. In this set of experiments, we investigate the impact of mobility rate $v \in \left [1, 5 \right ]$ on the learning performance while keeping the $E=1$. The results, illustrated in Fig.~\ref{ovfl-v}, demonstrate a similarity in test loss, indicating that our online algorithm consistently adapts well across a range of $v$ values, further underscoring the superiority of our proposed OVFL algorithm.

\textbf{Impact of type of quantizer}. In this section, we employ a vector quantizer called 2-dimensional hexagonal lattice quantizer \cite{shlezinger2020uveqfed}. This quantizer operates on a hexagonal lattice structure, assigning the pre-quantization value to the nearest internal lattice coordinates based on Euclidean distance. Similar, $b$ represents the bits per component used for compression. In the context of the 2-dimensional hexagonal lattice quantizer, this means there are $2^{2b}$ two-dimensional vectors. Here we investigate the impact of $b \in \left [2, 4 \right ]$ on the learning performance. The results, as illustrated in Fig.~\ref{ovfl-vqr} and Fig.~\ref{ovfl-vqr-com}, reveal that within the vector quantizer setting, performance is akin to that of a scalar quantizer, where a large $b$ results in better performance, but higher communication cost.

We note that learning performance with vector quantizers is significantly better than with scalar quantizers. This superior performance of vector quantizers is likely due to their improved capability to efficiently handle multidimensional data, adapt to diverse data distributions, and minimize quantization errors. Another potential reason for the enhanced performance of vector quantizers over scalar quantizers is that scalar quantizers represent the original raw data using only $b$ bits, whereas vector quantizers require $2b$ bits.

\begin{figure}[htp]
\vspace{-5pt}
\centering
\subfloat{\includegraphics[width=0.7\linewidth]{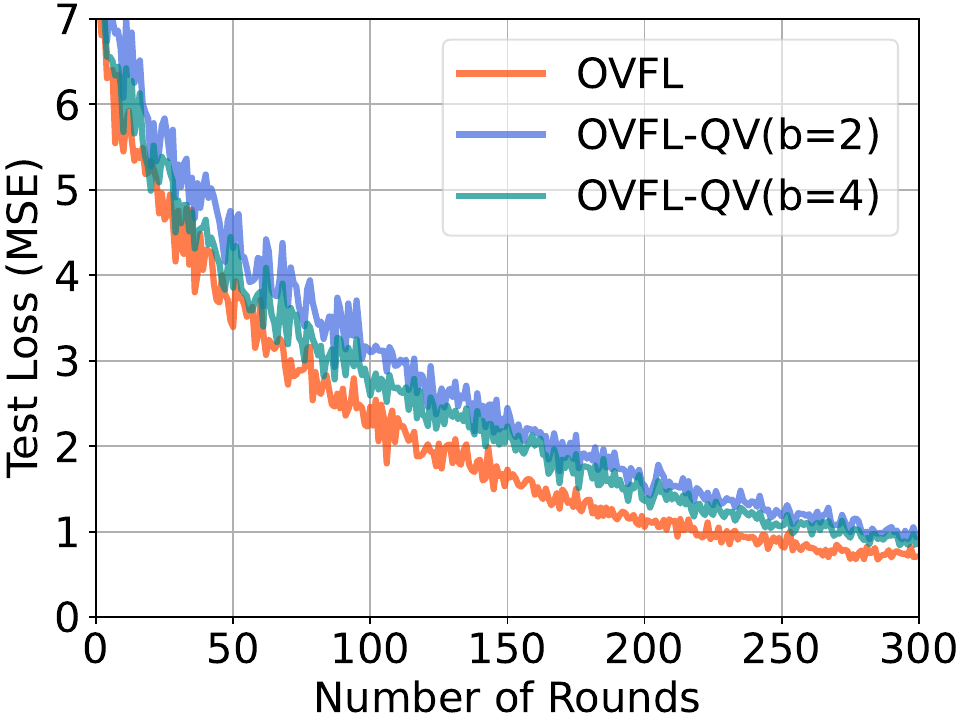}} 
\caption{Performance comparison of OVFL in vector quantizer setting with different $b$ by round.}  
\label{ovfl-vqr}
\vspace{-2pt}
\end{figure}

\begin{figure}[htp]
\vspace{-5pt}
\centering
\subfloat{\includegraphics[width=0.7\linewidth]{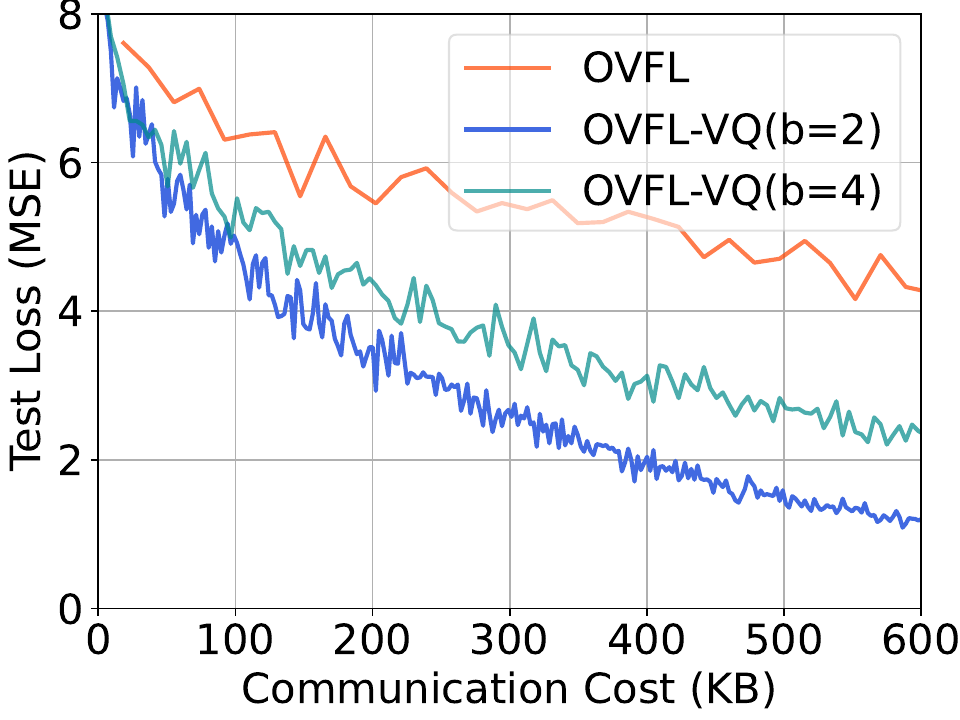}} 
\caption{Performance comparison of OVFL in vector quantizer setting with different $b$ by communication cost.}  
\label{ovfl-vqr-com}
\vspace{-2pt}
\end{figure}

\textbf{Impact of number of PUs}.
In this section, we explore the impact of varying the number of PUs on learning performance, while maintaining a constant number of SUs at 4. Specifically, we investigate scenarios with the number of PUs set to either 1, 2 or 4, while keeping all other parameters constant. The results are presented in Fig.~\ref{ovfl-pu}. The learning performance deteriorates with the increase in the number of PUs, which can be attributed to the fact that when there is only one PU, the SU can accurately assess the PU's transmit power levels through the RSS. However, as the number of PUs increases, the superposition of RSS signals complicates the relationship between the aggregated RSS and the transmit power levels of each individual PU. Consequently, this complexity renders the learning process more challenging and diminishes learning performance.

\begin{figure}[htp]
\vspace{-5pt}
\centering
\subfloat{\includegraphics[width=0.7\linewidth]{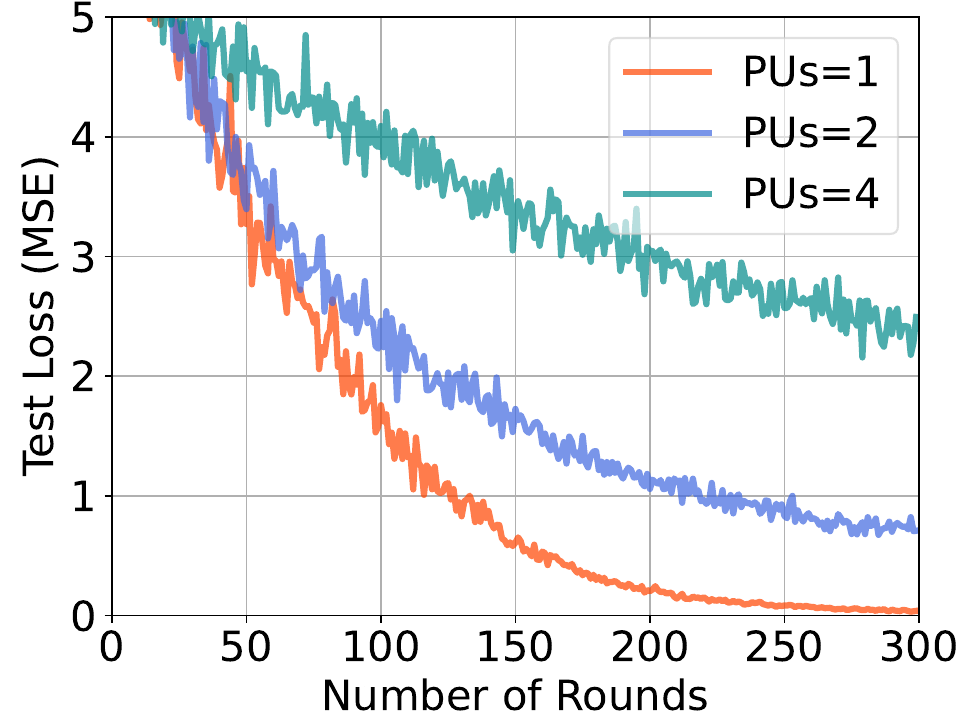}} 
\caption{Performance comparison of OVFL with different number of PUs.}  
\label{ovfl-pu}
\vspace{-5pt}
\end{figure}

\textbf{Impact of number of SUs}.
In this section, we explore the impact of varying the number of SUs on learning performance, while maintaining a constant number of PUs at 2. Specifically, we investigate scenarios with the number of SUs set to either 2, 4 or 6, while keeping all other parameters constant. The results, as displayed in Fig.~\ref{ovfl-su}, reveal that learning performance enhances with an increase in the number of SUs when compare the results between $2$ and $4$. However, beyond a certain threshold in the number of SUs, further enhancements in learning performance become limited despite additional increases in SUs. These experiments lead us to conclude that, generally, a higher number of PUs requires a corresponding increase in the number of SUs, but beyond a specific threshold, the increase in the number of SUs contributes only marginally to learning performance improvements.

\begin{figure}[htp]
\vspace{-5pt}
\centering
\subfloat{\includegraphics[width=0.7\linewidth]{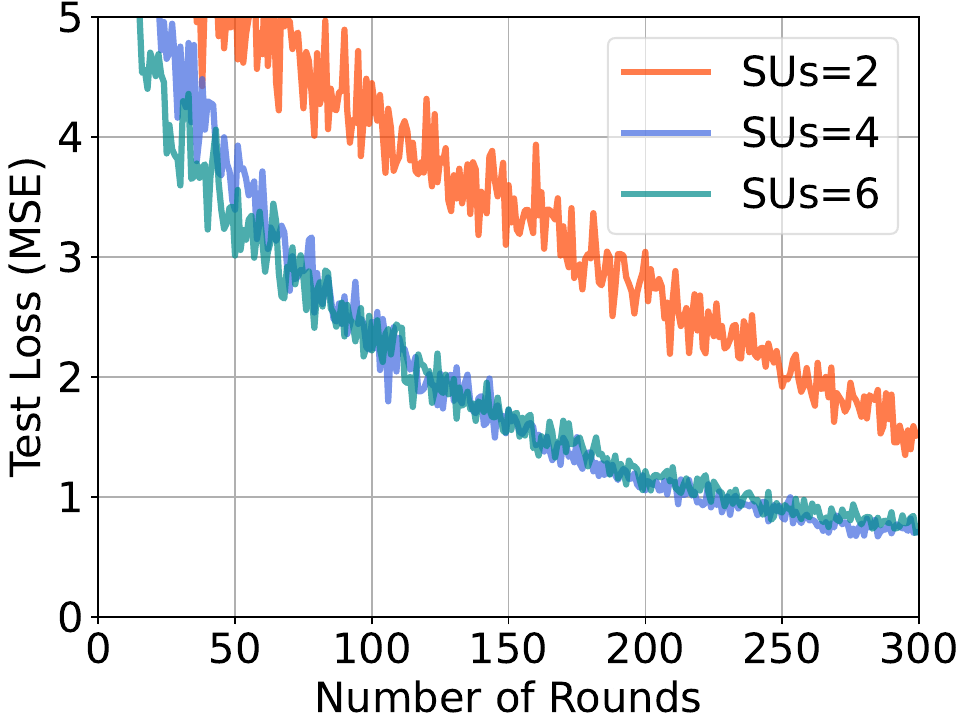}} 
\caption{Performance comparison of OVFL with different number of SUs.}  
\label{ovfl-su}
\vspace{-5pt}
\end{figure}

\textbf{Study in smart publication transportation}. 
In this study, we build upon previous work \cite{bian2022mobility} that examined smart public transportation scenarios. Here, the SUs in our algorithm are modeled based on real-world movement trajectories of four actual bus routes operating in Miami between West Kendall (WK) and Dadeland South (DS). We assume that each SU represents a corresponding bus on each bus line, with each bus traveling from WK to DS at its respective speed. It is posited that we have collected data across 100 global rounds and conducted 100 global rounds of training throughout the process. The two PUs are located near WK and DS, respectively. The bus lines are illustrated in Fig.~\ref{bus_trace}. In this instance, we examine the OVFL scenario with $E = 4$ and no quantization. The learning performance is depicted in Fig.~\ref{bus_lp}. The results demonstrate that the test loss decreases effectively, consistent with the outcomes of our prior experiments.

\begin{figure}[htp]
\vspace{-5pt}
\centering
\subfloat{\includegraphics[width=0.9\linewidth]{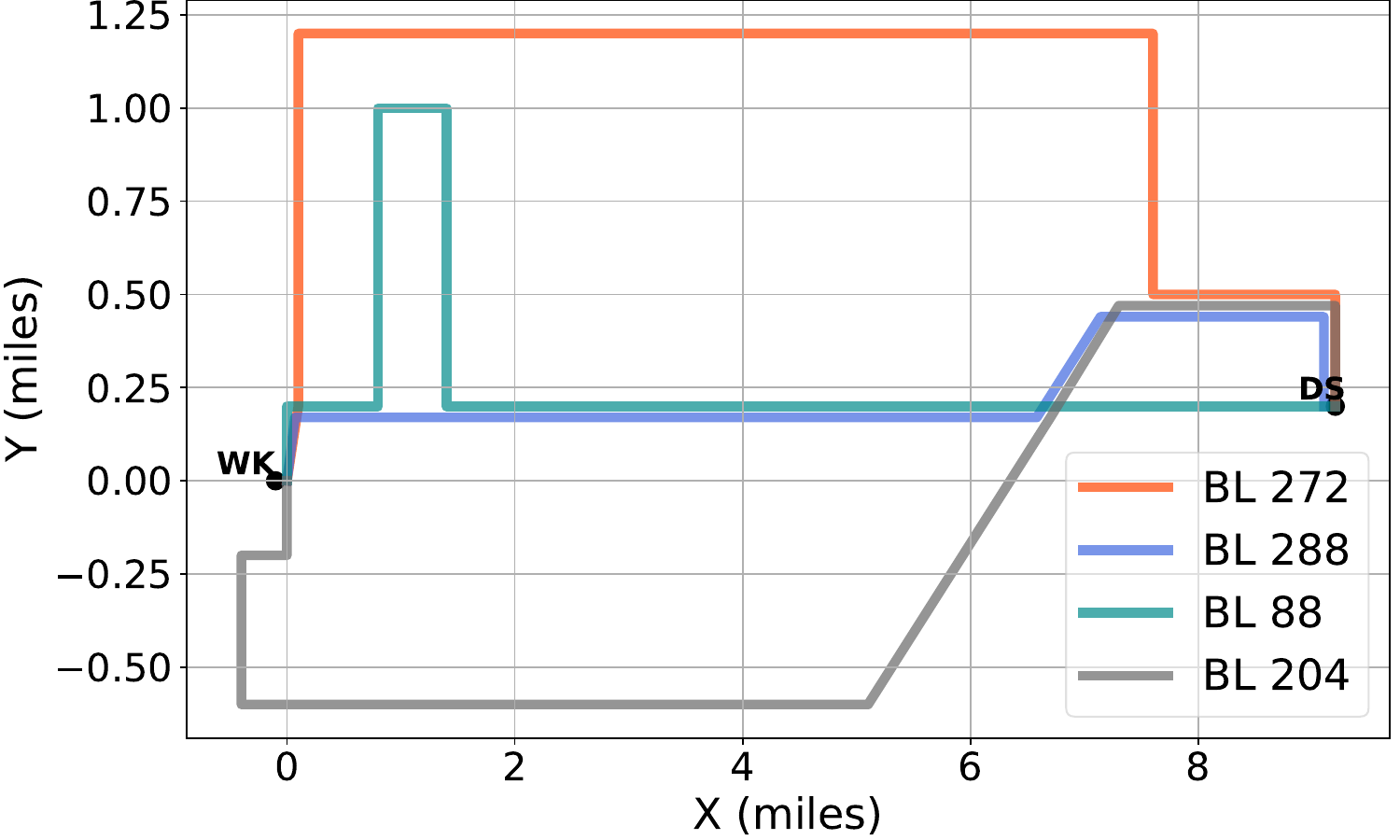}} 
\caption{An illustration of smart public transportation with four real bus lines in Miami.}  
\label{bus_trace}
\vspace{-5pt}
\end{figure}

\begin{figure}[htp]
\vspace{-5pt}
\centering
\subfloat{\includegraphics[width=0.7\linewidth]{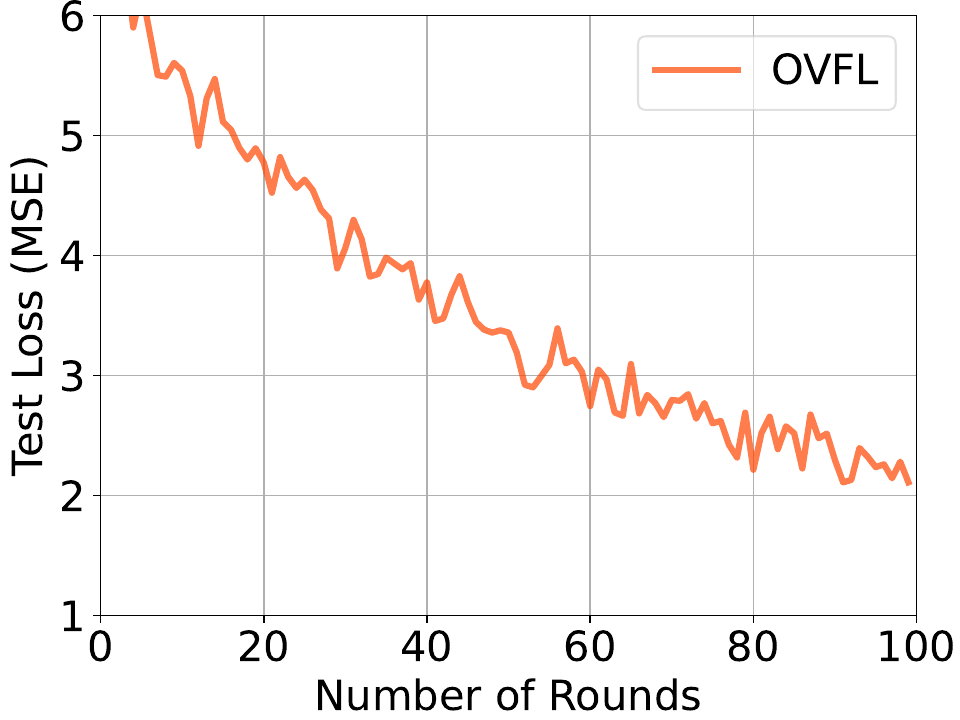}} 
\caption{Learning performance of OVFL under the smart publication transportation.}  
\label{bus_lp}
\vspace{-5pt}
\end{figure}

The aforementioned experimental results systematically and comprehensively illustrate the superiority of the OVFL algorithm, as well as the impact of various parameters and scenarios on the experimental outcomes.

\section{Conclusion}
Our paper presents a novel VFL algorithm named \textit{online vertical federated learning} (OVFL), which differs from traditional VFL by operating on the dynamic dataset and shift learning objective. We provide a detailed description of the workflow of OVFL and present an in-depth analysis of its regret, both with and without the application of quantization. The findings indicate that a sublinear regret bound is attainable in scenarios without quantization. In scenarios with quantization, selecting an appropriate decay rate for the quantization level enables the achievement of a sublinear regret bound in the final result. In our experiments, we employed OVFL to address the cooperative spectrum sensing problem. We also investigate the algorithm's performance across a range of parameters and scenarios, all of which consistently affirmed the superiority of our proposed OVFL algorithm. 

Future research on OVFL will follow several directions: firstly, it will delve into the implementation of more advanced online learning algorithms within VFL scenarios, aiming for a faster rate of regret reduction. Secondly, the research will focus on expanding OVFL to a wider range of application scenarios within wireless communication and addressing the unique challenges inherent to each scenario. Lastly, while the current approach has addressed communication efficiency, further research is needed to achieve computational efficiency in OVFL through model pruning and knowledge distillation, customized for the specific workflow of OVFL.

\bibliographystyle{IEEEtran}
\bibliography{bibligraphy}


\setcounter{page}{1}
\newpage
\appendices

\section{Proof of Lemma 1}
In the Lemma 1, our objective is to establish the validity of the following inequality.
\begin{align}
( {\textbf{G}}^{t, \tau})^{\top} (\Theta^{t, 0} - \Theta^*) \geq   ({\textbf{G}}^{t, 0})^{\top} (\Theta^{t, 0} - \Theta^*) - 2 \eta \beta D \epsilon \notag
\end{align}
Next, we detail the specific steps of the proof. Initially, drawing from Assumption \ref{assm:model-variant}, we derive the following:
\begin{align}
\left | \Theta^{t, 0}_{k, d} - \Theta^*_{k, d} \right | \leq 2 \beta
\end{align}
By definition of ${\textbf{G}}^{t, \tau}_{k, d}$ and Assumption \ref{assm:gradient-change}, we get the following inequality:
\begin{align}
& \left |{\textbf{G}}^{t, \tau}_{k, d}  - {\textbf{G}}^{t, 0}_{k, d} \right | \leq \left \|  {\textbf{G}}_{k}^{t, \tau} - {\textbf{G}}^{t, 0}_{k} \right \| \\
& \leq \epsilon \left \| \theta_k^{t, \tau} - \theta_k^{t, 0} \right \| \\
& \leq \eta \epsilon \left \| \sum_{\tau = 0}^{E-1} \nabla_{k}  F_t ({\Phi}^{t, \tau}_k)  \right \| \\
& \leq \eta \epsilon E L
\end{align}
Where the last inequality is due to the Assumption 2. By integrating the aforementioned inequality, we can deduce the following:
\begin{align}
\left |({\textbf{G}}^{t, \tau}_{k, d}  - {\textbf{G}}^{t, 0}_{k, d} ) (\Theta^{t, 0}_{k, d} - \Theta^*_{k, d} )\right | \leq 2 \eta \beta \epsilon E L \label{combine}
\end{align}
Where Eq.~(\ref{combine}) is hold due to the fact $\left | A B \right | \leq \left | A \right | \left | B \right |$. Then we can get the lower bound of ${\textbf{G}}^{t, \tau}_{k, d}  (\Theta^{t, 0}_{k, d} - \Theta^*_{k, d} )$ as:
\begin{align}
{\textbf{G}}^{t, \tau}_{k, d}  (\Theta^{t, 0}_{k, d} - \Theta^*_{k, d} ) \geq {\textbf{G}}^{t, 0}_{k, d} (\Theta^{t, 0}_{k, d} - \Theta^*_{k, d} ) - 2 \eta \beta \epsilon E L
\end{align}
Finally, we can get the lower bound of $( {\textbf{G}}^{t, \tau})^{\top} (\Theta^{t, 0} - \Theta^*)$:
\begin{align}
& ( {\textbf{G}}^{t, \tau})^{\top} (\Theta^{t, 0} - \Theta^*)  = \sum_{d=1}^D {\textbf{G}}^{t, \tau}_{k, d} (\Theta^{t, 0}_{k, d} - \Theta^*_{k, d}) \\
& \geq \sum_{d=1}^D {\textbf{G}}^{t, 0}_{k, d} (\Theta^{t, 0}_{k, d} - \Theta^*_{k, d} ) - 2 \eta \beta D \epsilon E L \\
& \geq ({\textbf{G}}^{t, 0})^{\top} (\Theta^{t, 0} - \Theta^*) - 2 \eta \beta D \epsilon E L
\end{align}
The end of the proof of Lemma 1.

\section{Proof of Theorem 1}
\label{appex-theo1}
In this section, we will present the detailed proof procedure for Theorem 1, which addresses the scenario of OVFL without the application of quantization methods. According to the overall model update rule of OVFL, we can get
$\Theta^{t+ 1, 0} = \Theta^{t, 0} - \eta  \sum_{\tau = 0}^{E -1} {\textbf{G}}^{t, \tau}$.
Then we obtain the subsequent inequality:
\begin{align}
&  \left [ \left \| \Theta^{t+1, 0} - \Theta^* \right \|^2 \right ] \\
= &  \left \| \Theta^{t, 0} - \Theta^* \right \|^2 + \eta^2 \left [ \left \|   \sum_{\tau = 0}^{E -1} {\textbf{G}}^{t, \tau} \right \|^2 \right ] \notag \\
- & 2 \eta  \left [ \sum_{\tau = 0}^{E -1} (  {\textbf{G}}^{t, \tau})^{\top} (\Theta^{t, 0} - \Theta^*) \right ] \label{the1-ex}
\end{align}
Then we can further get the bound of $ \left \|   \sum_{\tau = 0}^{E -1} {\textbf{G}}^{t, \tau} \right \|^2 $ as:
\begin{align}
\left \|   \sum_{\tau = 0}^{E -1} {\textbf{G}}^{t, \tau} \right \|^2  \leq  E\sum_{\tau = 0}^{E -1} \left \|  {\textbf{G}}^{t, \tau} \right \|^2 \label{jse-th1}
\end{align}

Where Eq.~(\ref{jse-th1}) is due to triangle inequality. Plugging  Eq.~(\ref{jse-th1}) into Eq.~(\ref{the1-ex}), we can obtain the upper-bound as:
\begin{align}
&  \left [ \left \| \Theta^{t+1, 0} - \Theta^* \right \|^2 \right ] \\
& \leq  \left \| \Theta^{t, 0} - \Theta^* \right \|^2 - 2 \eta \left [ \sum_{\tau = 0}^{E -1} ( {\textbf{G}}^{t, \tau})^{\top} (\Theta^{t, 0} - \Theta^*) \right ] \notag\\
& + \eta^2 E \sum_{\tau = 0}^{E -1} \left \|  {\textbf{G}}^{t, \tau} \right \|^2
\end{align}
Then we can further obtain the following upper-bound:
\begin{align}
& \sum_{\tau = 0}^{E -1} ( {\textbf{G}}^{t, \tau})^{\top} (\Theta^{t, 0} - \Theta^*) \label{inter-eq}\\
\leq & \frac{1}{2 \eta } ( \left \| \Theta^{t, 0} - \Theta^* \right \|^2 -  \left [ \left \| \Theta^{t+1, 0} - \Theta^* \right \|^2 \right ]) \notag \\
+ &  \frac{\eta E}{2} \sum_{\tau = 0}^{E -1} \left \|  {\textbf{G}}^{t, \tau} \right \|^2
\end{align}
Subsequently, it is necessary to establish the lower bound of Eq.~(\ref{inter-eq}). 
Then with the convexity of loss function in Assumption 1, we can get the lower-bound as:
\begin{align}
 ({\textbf{G}}^{t, 0})^{\top} (\Theta^{t, 0} - \Theta^*) \geq \left [ F_t (\Theta^{t, 0}) -  F_t (\Theta^*)  \right ] \label{eq:convex}
\end{align}

Then with the help of Lemma 1, we can further get:
\begin{align}
& ( {\textbf{G}}^{t, \tau})^{\top} (\Theta^{t, 0} - \Theta^*) \notag \\
& \geq   ({\textbf{G}}^{t, 0})^{\top} (\Theta^{t, 0} - \Theta^*) - 2 \eta \beta D \epsilon E L \label{eq:lma1}
\end{align}

We derive the lower-bound of Eq.(\ref{inter-eq}) by integrating Eq.(\ref{eq:convex}) and Eq.~(\ref{eq:lma1}), as follows:
\begin{align}
& \sum_{\tau = 0}^{E -1}  ( {\textbf{G}}^{t, \tau})^{\top} (\Theta^{t, 0} - \Theta^*) \\
& \geq E   \left [ F_t (\Theta^{t, 0}) -  F_t (\Theta^*)  \right ] - 2 \eta \beta D \epsilon E^2 L
\end{align}
Utilizing $\sum_{\tau = 0}^{E -1}  ( {\textbf{G}}^{t, \tau})^{\top} (\Theta^{t, 0} - \Theta^*)$ as a transitional term and combining both the upper and lower bounds, then taking the expectation $\mathbb{E}_{t}$ respect to $t$ on both sides and summing over $t = 1, 2, \dots, T$, we derive the subsequent regret bound:
\begin{align}
& \sum_{t=1}^{T} \mathbb{E}_{t} \left [ F_t (\Theta^{t, 0})  \right ]  -  \sum_{t=1}^{T} F_t (\Theta^*) \\
\leq &  \frac{1}{2 \eta E } \sum_{t=1}^{T} ( \mathbb{E}_{t}  \| \Theta^{t, 0} - \Theta^* \|^2 -  \mathbb{E}_{t+1} \left [ \left \| \Theta^{t+1, 0} - \Theta^* \right \|^2 \right ]) \notag \\
+ &  \frac{\eta}{2 } \sum_{t=1}^{T} \sum_{\tau = 0}^{E -1} \mathbb{E}_{t}  \left \| \textbf{G}^{t, \tau} \right \|^2 + 2 T \eta \beta D \epsilon E L\\
\overset{(a)}{\leq} & \frac{1}{2 \eta E }   \left \| \Theta^{1, 0} - \Theta^* \right \|^2  +  2 T \eta \beta D \epsilon E L \notag \\
+  &  \frac{\eta}{2 } \sum_{t=1}^{T} \sum_{\tau = 0}^{E -1} \left \| \left [ (\nabla_0  F_t ({\Phi}^{t, \tau}_0))^{\top}, \dots, (\nabla_K  F_t ({\Phi}^{t, \tau}_K))^{\top} \right ]^{\top} \right \|^2 \\
\overset{(b)}{\leq} & \frac{  \left \| \Theta^{1, 0} - \Theta^* \right \|^2}{2 \eta E }  + \frac{\eta T E K L^2}{2} + 2 \eta T \beta D \epsilon E L
\end{align}
Where (a)  results from the summation eliminating the intermediate term, and (b) holds when Assumption \ref{assm:bpd} is applied. Then we finish the proof of Theorem 1.

\section{Proof of Lemma 2}
In the Lemma 2, our objective is to establish the validity of the following inequality.
\begin{align}
( \hat{\textbf{G}}^{t, \tau})^{\top} (\Theta^{t, 0} - \Theta^*) \geq   ({\textbf{G}}^{t, 0})^{\top} (\Theta^{t, 0} - \Theta^*) - 2  \beta D (\eta \epsilon E L + \rho) \notag
\end{align}
Next, we detail the specific steps of the proof. Firstly, according to Assumption \ref{assm:model-variant}, we can further obtain:
\begin{align}
\left | \Theta^{t, 0}_{k, d} - \Theta^*_{k, d} \right | \leq 2 \beta
\end{align}
By integrating the aforementioned inequality with Assumption \ref{assm:gradient-change}, we can deduce the following:
\begin{align}
& \left |(\hat{\textbf{G}}^{t, \tau}_{k, d}  - {\textbf{G}}^{t, 0}_{k, d} ) (\Theta^{t, 0}_{k, d} - \Theta^*_{k, d} )\right | \\
= & \left |(\hat{\textbf{G}}^{t, \tau}_{k, d} - {\textbf{G}}^{t, \tau}_{k, d} + {\textbf{G}}^{t, \tau}_{k, d} - {\textbf{G}}^{t, 0}_{k, d} ) (\Theta^{t, 0}_{k, d} - \Theta^*_{k, d} )\right | \\
\leq & \left |(\hat{\textbf{G}}^{t, \tau}_{k, d} - {\textbf{G}}^{t, \tau}_{k, d} ) (\Theta^{t, 0}_{k, d} - \Theta^*_{k, d} )\right | \notag  \\
+ & \left | ({\textbf{G}}^{t, \tau}_{k, d} - {\textbf{G}}^{t, 0}_{k, d} ) (\Theta^{t, 0}_{k, d} - \Theta^*_{k, d} )\right | \\
\leq & 2 \eta \beta \epsilon E L + 2  \beta \rho \label{eq:all_error}
\end{align}
The derivation of Eq.~(\ref{eq:all_error}) is partly based on the previous results presented in Lemma 1 and partly on the stipulations of Assumption 5. Then we can get the lower bound as:
\begin{align}
&  \hat{\textbf{G}}^{t, \tau}_{k, d}  (\Theta^{t, 0}_{k, d} - \Theta^*_{k, d} ) \notag \\
& \geq {\textbf{G}}^{t, 0}_{k, d} (\Theta^{t, 0}_{k, d} - \Theta^*_{k, d} ) - 2  \beta (\eta \epsilon E L + \rho)
\end{align}
Finally, we can get the lower bound of $( \hat{\textbf{G}}^{t, \tau})^{\top} (\Theta^{t, 0} - \Theta^*)$:
\begin{align}
& ( \hat{\textbf{G}}^{t, \tau})^{\top} (\Theta^{t, 0} - \Theta^*)  = \sum_{d=1}^D \hat{\textbf{G}}^{t, \tau}_{k, d} (\Theta^{t, 0}_{k, d} - \Theta^*_{k, d}) \\
& \geq \sum_{d=1}^D {\textbf{G}}^{t, 0}_{k, d} (\Theta^{t, 0}_{k, d} - \Theta^*_{k, d} ) - 2  \beta D (\eta \epsilon E L + \rho) \\
& \geq ({\textbf{G}}^{t, 0})^{\top} (\Theta^{t, 0} - \Theta^*) - 2  \beta D (\eta \epsilon E L + \rho)
\end{align}
Here we finish the proof of Lemma 2.

\section{Proof of Theorem 2}
\label{appex-theo2}
In this section, we will present the detailed proof procedure for Theorem 2, which addresses the scenario of OVFL with the application of quantization methods. According to the overall model update rule of OVFL, we can get
$\Theta^{t+ 1, 0} = \Theta^{t, 0} - \eta  \sum_{\tau = 0}^{E -1} \hat{\textbf{G}}^{t, \tau}$.
Then we obtain the subsequent inequality:
\begin{align}
&  \left [ \left \| \Theta^{t+1, 0} - \Theta^* \right \|^2 \right ] \\
= &  \left \| \Theta^{t, 0} - \Theta^* \right \|^2 + \eta^2 \left [ \left \|   \sum_{\tau = 0}^{E -1} \hat{\textbf{G}}^{t, \tau} \right \|^2 \right ] \notag \\
- & 2 \eta  \left [ \sum_{\tau = 0}^{E -1} (  \hat{\textbf{G}}^{t, \tau})^{\top} (\Theta^{t, 0} - \Theta^*) \right ] \label{eq13}
\end{align}
Then we can further get the bound of $ \left \|   \sum_{\tau = 0}^{E -1} \hat{\textbf{G}}^{t, \tau} \right \|^2 $ as:
\begin{align}
& \left \|   \sum_{\tau = 0}^{E -1} \hat{\textbf{G}}^{t, \tau} \right \|^2 \leq  E\sum_{\tau = 0}^{E -1} \left \|  \hat{\textbf{G}}^{t, \tau} \right \|^2 \label{jse} \\
& \leq 2 E \sum_{\tau = 0}^{E -1} \left \|  \hat{\textbf{G}}^{t, \tau} - {\textbf{G}}^{t, \tau} \right \|^2 + 2 E \sum_{\tau = 0}^{E -1} \left \|   {\textbf{G}}^{t, \tau} \right \|^2 \label{quanti_part1}
\end{align}
Where Eq.~(\ref{jse}) is due to triangle inequality, and Eq.~(\ref{quanti_part1}) is due to the face $\left \| A+B \right \|^2 \leq 2 \left \| A \right \|^2 + 2 \left \| B \right \|^2$. Plugging  Eq.~(\ref{quanti_part1}) into Eq.~(\ref{eq13}), we can obtain the upper-bound as:
\begin{align}
&  \left [ \left \| \Theta^{t+1, 0} - \Theta^* \right \|^2 \right ] \\
& \leq  \left \| \Theta^{t, 0} - \Theta^* \right \|^2 - 2 \eta \left [ \sum_{\tau = 0}^{E -1} ( \hat{\textbf{G}}^{t, \tau})^{\top} (\Theta^{t, 0} - \Theta^*) \right ] \notag\\
& + 2 \eta^2 E \sum_{\tau = 0}^{E -1} \left \|  \hat{\textbf{G}}^{t, \tau} - {\textbf{G}}^{t, \tau} \right \|^2 + 2 \eta^2 E \sum_{\tau = 0}^{E -1} \left \|   {\textbf{G}}^{t, \tau} \right \|^2 \label{double_expand}
\end{align}
Then we can further obtain the following upper-bound:
\begin{align}
& \sum_{\tau = 0}^{E -1} ( \hat{\textbf{G}}^{t, \tau})^{\top} (\Theta^{t, 0} - \Theta^*) \label{inter-eql2}\\
\leq & \frac{1}{2 \eta } ( \left \| \Theta^{t, 0} - \Theta^* \right \|^2 -  \left [ \left \| \Theta^{t+1, 0} - \Theta^* \right \|^2 \right ]) \notag \\
+ &   \eta E \sum_{\tau = 0}^{E -1} \left \|  \hat{\textbf{G}}^{t, \tau} - {\textbf{G}}^{t, \tau} \right \|^2 +  \eta E \sum_{\tau = 0}^{E -1} \left \|   {\textbf{G}}^{t, \tau} \right \|^2
\end{align}
Subsequently, it is necessary to establish the lower bound of Eq.~(\ref{inter-eql2}). 
Then with the Assumption 1, we can get the lower-bound with the convexity of the loss function as:
\begin{align}
 ({\textbf{G}}^{t, 0})^{\top} (\Theta^{t, 0} - \Theta^*) \geq \left [ F_t (\Theta^{t, 0}) -  F_t (\Theta^*)  \right ] 
\end{align}

Then based on Lemma 2, we can further get:
\begin{align}
& ( \hat{\textbf{G}}^{t, \tau})^{\top} (\Theta^{t, 0} - \Theta^*)  \notag\\
& \geq   ({\textbf{G}}^{t, 0})^{\top} (\Theta^{t, 0} - \Theta^*) - 2  \beta D (\eta \epsilon E L + \rho)
\end{align}

Then we can get the lower-bound of Eq.~(\ref{inter-eql2}) as:
\begin{align}
& \sum_{\tau = 0}^{E -1}  ( \hat{\textbf{G}}^{t, \tau})^{\top} (\Theta^{t, 0} - \Theta^*) \\
& \geq E   \left [ F_t (\Theta^{t, 0}) -  F_t (\Theta^*)  \right ] - 2  E \beta D (\eta \epsilon E L + \rho)
\end{align}
Utilizing $\sum_{\tau = 0}^{E -1}  ( \hat{\textbf{G}}^{t, \tau})^{\top} (\Theta^{t, 0} - \Theta^*)$ as a transitional term and combining both the upper and lower bounds, then taking the expectation $\mathbb{E}_{t}$ respect to $t$ on both sides and summing over $t = 1, 2, \dots, T$, we derive the subsequent regret bound:
\begin{align}
& \sum_{t=1}^{T} \mathbb{E}_{t} \left [ F_t (\Theta^{t, 0})  \right ]  -  \sum_{t=1}^{T} F_t (\Theta^*) \\
\leq &  \frac{1}{2 \eta E } \sum_{t=1}^{T} ( \mathbb{E}_{t}  \| \Theta^{t, 0} - \Theta^* \|^2 -  \mathbb{E}_{t+1} \left [ \left \| \Theta^{t+1, 0} - \Theta^* \right \|^2 \right ]) \notag \\
+ &   \eta \sum_{t=1}^{T} \sum_{\tau = 0}^{E -1} \left \|  \hat{\textbf{G}}^{t, \tau} - {\textbf{G}}^{t, \tau} \right \|^2 +  \eta \sum_{t=1}^{T} \sum_{\tau = 0}^{E -1} \left \|   {\textbf{G}}^{t, \tau} \right \|^2 \notag \\
+ &  2  T \beta D \eta \epsilon E L + 2  T \beta D \rho\\
\overset{(c)}{\leq} & \frac{1}{2 \eta E }   \left \| \Theta^{1, 0} - \Theta^* \right \|^2  + \eta T E D \rho^2 \notag \\
+  &  \eta \sum_{t=1}^{T} \sum_{\tau = 0}^{E -1} \left \| \left [ (\nabla_0  F_t ({\Phi}^{t, \tau}_0))^{\top}, \dots, (\nabla_K  F_t ({\Phi}^{t, \tau}_K))^{\top} \right ]^{\top} \right \|^2 \notag \\
+  & 2  T \beta D \eta \epsilon E L + 2  T \beta D \rho\\
\overset{(d)}{\leq} & \frac{  \left \| \Theta^{1, 0} - \Theta^* \right \|^2}{2 \eta E }  + \eta T E K L^2  + \eta T E D \rho^2 \notag\\
+  &  2  \eta T \beta D  \epsilon E L + 2  T \beta D \rho
\end{align}
Where (c) results from the summation eliminating the intermediate term and the application of Assumption \ref{assm:gradient-quanti}. (d) holds when Assumption \ref{assm:bpd} is applied. Then we finish the proof of Theorem 2.
\end{document}